\documentclass{article}

\usepackage{microtype}
\usepackage{graphicx}
\usepackage{subfigure}
\usepackage{booktabs}
\usepackage{amsmath}
\usepackage{amssymb}
\usepackage{amsthm}
\usepackage{multirow}
\usepackage{float}
\usepackage{enumitem}
\usepackage{titlesec}
\usepackage{standalone}
\usepackage{tikz}
\usepackage{stfloats}
\usetikzlibrary{arrows.meta, positioning, shapes.geometric, calc, quotes, intersections}
\usepackage{tikz}
\usetikzlibrary{shapes, arrows, positioning, fit, backgrounds, calc}

\usepackage{xcolor}
\definecolor{ForestGreen}{rgb}{0.13,0.55,0.13}
\usepackage{soul}

\usepackage{hyperref}

\usepackage[accepted]{icml2026}

\begin{document}

\twocolumn[
\icmltitle{Learning-Augmented Scalable Linear Assignment Problem Optimization via Neural Dual Warm-Starts}

\icmlsetsymbol{equal}{*}

\begin{icmlauthorlist}
\icmlauthor{Ilay Yavlovich}{aff1,equal}
\icmlauthor{Jad Agbaria}{aff1,equal}
\icmlauthor{Muhamed Mhamed}{aff1}
\icmlauthor{Nir Weinberger}{aff1}
\icmlauthor{Jose Yallouz}{aff1}
\end{icmlauthorlist}

\icmlaffiliation{aff1}{Department of Electrical and Computer Engineering,
Technion -- Israel Institute of Technology, Haifa, Israel}

\icmlcorrespondingauthor{Ilay Yavlovich}{ilayy@campus.technion.ac.il}

\vskip 0.3in
]

\printAffiliationsAndNotice{\icmlEqualContribution}

\icmlkeywords{Machine Learning, ICML, Combinatorial Optimization, Matching, optimization,  learning-augmented, theory-meets-systems}

\begin{abstract}
The Linear Assignment Problem is a fundamental combinatorial optimization task where classical exact solvers ensure optimality but suffer from an $\mathcal{O}(N^{3})$ bottleneck, while recent neural approximations struggle with scalability and exactness. We propose a \emph{learning-augmented} framework that accelerates exact solvers by predicting dual variables to warm-start the search, backed by a fallback mechanism to preserve worst-case guarantees. Central to our approach is \textbf{RowDualNet}, a lightweight, row-independent architecture that avoids the $\mathcal{O}(N^{2})$ memory bottleneck of graph models, enabling scalable neural warm-starting up to $N=16{,}384$. Feasibility is guaranteed by construction via the Min-Trick mechanism, completely eliminating the need for costly iterative projections. Empirically, our method drastically reduces the search effort of the Jonker-Volgenant (LAPJV) algorithm, yielding robust zero-shot generalization with strict optimality and end-to-end speedups of over 2x on complex synthetic data, 1.25x on real-world tracking, and 1.5x on transportation networks.
\end{abstract}

\section{Introduction}
\label{sec:intro}
The Linear Assignment Problem (LAP) is a cornerstone of combinatorial optimization, serving as a critical primitive in diverse high-throughput applications, e.g., multi-object tracking in computer vision,  resource scheduling in logistics, and facility allocation in operations research. Formally, given a cost matrix $C \in \mathbb{R}^{N \times N}$, the goal is to find a one-to-one matching between rows and columns that minimizes the total cost. 

As these application domains continue to scale, the demand for solvers that are both \textit{fast} and \textit{exact} has grown. However, current approaches force a compromise. Classical exact algorithms, such as the Hungarian method \cite{kuhn1955hungarian, munkres1957algorithms} and its successor, the Jonker-Volgenant (LAPJV) algorithm \cite{jonker1986improving, jonker1987shortest}, guarantee optimality, but suffer from cubic time complexity $\mathcal{O}(N^3)$. For large-scale instances ($N \ge 10^3$), this computational latency becomes an issue for real-time applications.

On the other hand, recent neural solvers attempt to replace combinatorial algorithms entirely with deep learning models, e.g Graph Neural Networks (GNNs) \cite{liu2024glan, aironi2024graph}. While these methods offer rapid inference, they do not eliminate the computational bottleneck. First, they are \textit{approximations}, often treating hard constraints as soft loss terms or rely on greedy post-processing, failing to guarantee valid or optimal solutions. Moreover, they struggle with \textit{scalability} due to quadratic memory complexity. Architectures relying on edge convolutions typically require $\mathcal{O}(N^2 \cdot H)$ memory to maintain high-dimensional latent representations for every edge, causing memory exhaustion on instances larger than $N \approx 2{,}000$ (where hidden dimensions $H$ inflate the memory footprint by orders of magnitude). Thus, while they ameliorate the runtime issues of $\mathcal{O}(N^3)$ classical algorithms, they cannot scale to the massive instances required by modern applications.

In this paper, we propose an intermediate path: using deep learning not to replace the solver, but to \textit{accelerate} it. We leverage the theory of primal-dual optimization, which states that initializing an exact solver with dual variables close to the optimum can drastically reduce the search space. While this ``learned duals" model has been explored theoretically~\cite{dinitz2021faster}, prior attempts have relied on expensive iterative algorithms to project invalid predictions onto a feasible set, negating the speed advantage of the neural network, and do not scale up to larger instances.

\textbf{Summary of Contributions.} We propose \textbf{Neural Dual Warm-Starting}, a learning-augmented framework that combines the inference speed of deep learning with the optimality guarantees of exact algorithms. To the best of our knowledge, this is the first approach that scales to industrial problem sizes ($N=16{,}384$) while preserving worst-case safety. Our main contributions are:
\begin{itemize}[nosep]
    \item \textbf{Neural Dual Warm-Starting Framework:}
    We introduce a solver-aware framework that predicts dual variables to warm-start exact solvers. By utilizing a constructive \textbf{Min-Trick mechanism}, we guarantee dual feasibility by definition, effectively bypassing the expensive initial phases of the combinatorial search while ensuring fallback to cold-start execution with no asymptotic degradation.
    
    \item \textbf{RowDualNet Architecture:}
    We propose RowDualNet, a row-independent neural architecture that avoids the $\mathcal{O}(N^2)$ memory bottleneck of GNNs, enabling neural warm-starting at scale up to $N=16{,}384$.
  
    \item \textbf{Exact and Scalable Performance Gains:}
   We illustrate that the implementation of neural dual warm-starting results in an enhancement of over \textbf{2${\times}$ end-to-end speedup} when applied to complex synthetic distributions; furthermore, it demonstrates improvements of over \textbf{1.25${\times}$} and \textbf{1.5${\times}$ end-to-end speedup} on empirical datasets pertaining to tracking and transportation, respectively, when compared to cold-start LAPJV baselines, while simultaneously ensuring \textbf{exact optimality} and maintaining robustness in the context of distributional shifts.
\end{itemize}
\section{Preliminaries}
\label{sec:prelim}

We consider the LAP for a cost matrix $C \in \mathbb{R}^{N \times N}$, where $C_{ij}$ represents the cost of assigning agent $i$ to task $j$. The primal optimization problem seeks a permutation matrix $X \in \{0,1\}^{N \times N}$ that minimizes the total cost:
\begin{equation}
\begin{split}
    \min_{X} \sum_{i,j} C_{ij} X_{ij} \quad \text{s.t.} \quad & \sum_{j} X_{ij} = 1 \; \forall i, \\
    & \sum_{i} X_{ij} = 1 \; \forall j, \; X_{ij} \in \{0,1\}.
\end{split}
\end{equation}

Because the constraint matrix of this Integer Linear Program (ILP) is totally unimodular, its continuous Linear Program (LP) relaxation is guaranteed to admit an integral optimal solution~\cite{hoffman1956integral}. The corresponding dual problem seeks to maximize $\sum_i u_i + \sum_j v_j$ subject to the feasibility constraint $u_i + v_j \le C_{ij}$ for all pairs $(i,j)$, where $u \in \mathbb{R}^N$ and $v \in \mathbb{R}^N$ are the row and column potentials, respectively. While solving the continuous LP is theoretically sufficient, specialized combinatorial algorithms vastly outperform general-purpose LP solvers by exploiting this primal-dual structure.

\textbf{Primal-Dual Optimality.} The complementary slackness condition dictates that a feasible assignment $X$ and feasible duals $(u,v)$ are optimal if and only if $X_{ij} = 1 \implies u_i + v_j = C_{ij}$. We define the \textit{reduced cost} as $r_{ij} = C_{ij} - u_i - v_j \ge 0$. The edges where $r_{ij}=0$ form the \textit{equality subgraph}. Classical dual solvers operate by iteratively updating $u$ and $v$ until a perfect matching can be found entirely within this equality subgraph.

\textbf{The LAPJV Algorithm.} The Jonker-Volgenant (LAPJV) algorithm~\cite{jonker1986improving, jonker1987shortest} remains the gold standard for exact linear assignment. It operates in two phases:
\begin{enumerate}
    \item \textbf{Greedy Match Step (Initialization):} Fast, static heuristics (e.g., column reduction) are applied to rapidly construct an initial set of feasible duals and a partial primal matching. 
    \item \textbf{Augmentation Phase:} For any row left unassigned by the greedy step, the algorithm executes a shortest-augmenting-path search on the residual graph. This search dynamically updates the dual potentials to expose new zero-reduced-cost edges until the assignment can be successfully augmented.
\end{enumerate}

While the greedy initialization is highly efficient ($\mathcal{O}(N^2)$), its static heuristics ignore complex, distribution-specific structures. When this phase fails to resolve a large portion of assignments, the algorithm is forced into a lengthy augmentation phase. Because each of the $\mathcal{O}(N)$ unassigned rows may require a shortest-path search over $\mathcal{O}(N^2)$ edges, this phase dictates the worst-case cubic complexity $\mathcal{O}(N^3)$ and serves as the primary computational bottleneck of the solver.

\section{Method: Neural Dual Warm-Starting}
\label{sec:method}

We propose a learning-augmented framework designed to accelerate the LAP solving process without sacrificing optimality. Our method operates on the principle of \textit{dual warm-starting}: instead of asking a neural network to output the discrete assignment matrix $X$ (which is prone to invalidity), the network is trained to predict the continuous row potentials $\hat{u}$. We then constructively derive the column potentials $\hat{v}$ to guarantee dual feasibility, providing the exact solver with an equality subgraph that drastically prunes the search space.
Given a cost matrix $C \in \mathbb{R}^{N \times N}$, our end-to-end pipeline (Fig.~\ref{fig:pipeline}) consists of four sequential stages:

\begin{figure*}[t] 
    \centering
    \begin{tikzpicture}[
    node distance = 1.5cm and 0.8cm,
    process/.style={
        rectangle, 
        draw=blue!50!black, 
        fill=blue!5, 
        thick,
        rounded corners, 
        text centered, 
        minimum height=3em, 
        align=center,
        font=\sffamily\small
    },
    data/.style={
        rectangle, 
        draw=gray!50!black, 
        thick,
        fill=gray!10, 
        text centered, 
        align=center,
        font=\sffamily\small
    },
    tall data/.style={
        data,
        minimum height=2.5cm,
        minimum width=1cm
    },
    matrix data/.style={
        data,
        minimum size=1.5cm
    },
    arrow/.style={
        ->, 
        >={Stealth[scale=1.2]}, 
        thick, 
        rounded corners,
        color=gray!40!black
    },
    edge label/.style={
        font=\footnotesize\sffamily,
        pos=0.5,
        align=center,
        fill=white, 
        inner sep=2pt
    }
]


\node (C) [matrix data] {\textbf{Cost Matrix}\\($C$)};

\node (F) [data, right=of C] {\textbf{Row}\\\textbf{Features}\\($F$)};

\node (RowDualNet) [process, right=of F] {\textbf{RowDualNet}\\(Predict \normalsize $\hat{u}$)};


\node (MinTrick) [process, right=of RowDualNet] {\textbf{Min-Trick}\\ \normalsize $\hat{v}_j = \min_i(C_{ij} - \hat{u}_i)$};

\node (uvPair) [data, right=of MinTrick, minimum height=2em, minimum width=3cm] {\textbf{Dual-Feasible Pair}\\ \normalsize($\hat{u}, \hat{v}$)};

\node (SeededLAP) [process, right=of uvPair, fill=green!5, draw=green!50!black] {\textbf{Seeded LAPJV}\\\textbf{Solver}};

\node (OutputMain) [matrix data, right=of SeededLAP] {\textbf{Optimal}\\\textbf{Assignment}\\($X^*$)};


\node (ColdLAP) [process, below=0.5cm of SeededLAP, fill=red!5, draw=red!50!black] {\textbf{Cold LAPJV}\\\textbf{Solver}};


\draw [arrow] (C) -- (F);
\draw [arrow] (F) -- (RowDualNet);
\draw [arrow] (RowDualNet) -- (MinTrick);
\draw [arrow] (MinTrick) -- (uvPair);
\draw [arrow] (uvPair) -- (SeededLAP);
\draw [arrow] (SeededLAP) -- (OutputMain);


\draw [arrow] (C.south) -- ++(0,-0.3cm) -| (MinTrick.south) 
    node[edge label, pos=0.25, below] {};



\draw [arrow, color=red!60!black] (uvPair.south) to[out=270, in=180]
    node[edge label, pos=0.5, left, align=right, text width=2cm, text=red!60!black] {Fallback\\Mechanism}
    (ColdLAP.west);

\draw [arrow, color=red!60!black] (ColdLAP.east) to[out=0, in=270] (OutputMain.south);

\end{tikzpicture} 
    \caption{\textbf{Pipeline for Neural Dual Warm-Starts.} We extract row-centric features, predict row potentials $\hat{u}$ via RowDualNet, and construct feasible column potentials $\hat{v}$ using the Min-Trick. This valid dual pair seeds the LAPJV solver. A fallback mechanism (red path) ensures robustness by reverting to a cold start if the prediction quality is low.}
    \label{fig:pipeline}
\end{figure*}

\subsection{Row-Centric Feature Extraction}

We first compress the cost matrix into a lightweight feature matrix $F \in \mathbb{R}^{N \times D}$. We select a compact set of $D\ll N$ features, detailed in App.~\ref{app:features}, designed to capture critical signals such as: \textbf{cost distribution} (e.g., min, mean), \textbf{ambiguity} (e.g., entropy), and \textbf{global competitiveness} (e.g., rank statistics). This specific feature set was chosen to provide a sufficient summary of an agent's ``market power" while maintaining $\mathcal{O}(N \cdot D)$ complexity, enabling scalability to massive instances. 

\subsection{Neural Prediction (RowDualNet)}

The feature matrix $F$ is passed through \textbf{RowDualNet}, a row-independent deep neural network that treats the problem as a set-processing task (further detailed in App.~\ref{app:features}). To capture inter-agent competition efficiently, it employs a two-stage mechanism:

\begin{enumerate}[nosep]
    \item \textbf{Independent Encoding:} First, a shared MLP transforms each row's features into an initial dual estimate $\hat{u}_{init}$.
    \item \textbf{Reduced Cost k-NN Refinement:} We then apply a lightweight ``sparse refinement" step. For each row $i$, we identify the $k$ columns with the smallest ``pseudo-reduced costs" ($C_{ij} - \hat{u}_{init, i}$) and aggregate their corresponding values. This operation can be interpreted as a $k$-Nearest Neighbor search in the dual space, informing the agent of its ``best" options' prices relative to its current bid. Because the optimal assignment is sparse (each agent matches exactly one task), the relevant competitive information is highly concentrated in the few columns with near-zero reduced costs. Thus, a small fixed $k\ll N$ is sufficient to capture the necessary signal even for massive $N$.    
\end{enumerate}

The final output is the scalar potential $\hat{u}_i$. We employ a supervised learning approach where the ground-truth dual variables $u^*$ are generated offline using the exact LAPJV solver. The training objective minimizes the Mean Absolute Error (MAE) between $\hat{u}$ and $u^*$, regularized by a \textbf{Complementary Slackness Loss} that enforces tightness on ground-truth optimal edges. This architecture ensures that memory usage scales linearly with $N$, avoiding the bottlenecks of prior full-graph approaches.

\subsection{Feasibility via the Min-Trick.} 
Given the predicted row potentials $\hat{u}$, we  compute the corresponding column potentials $\hat{v}$ using the \textit{Min-Trick}: 
$\hat{v}_j = \min_{i} (C_{ij} - \hat{u}_i)$.
This operation guarantees that the resulting pair $(\hat{u}, \hat{v})$ satisfies the dual feasibility constraint $\hat{u_i} +\hat{v_j} \le C_{ij}$ for all $i,j$ by construction. This step is relatively computationally efficient ($\mathcal{O}(N^2)$ but highly parallelizable) and eliminates the need for iterative projection algorithms.

\subsection{Exact Resolution (Seeded LAPJV).} 
Finally, the feasible duals $(\hat{u}, \hat{v})$ and the cost matrix $C$ are passed to the LAPJV solver, which we implemented by utilizing a custom wrapper around the optimized \texttt{LAP} library core that exposes the dual initialization arrays. Instead of the standard heuristic initialization (which starts with row minima), we inject our predicted $\hat{u}$ and $\hat{v}$ directly into the solver's memory structures before the first reduction phase. This initializes the solver's internal state with a dense equality subgraph ($r_{ij} = C_{ij} - \hat{u}_i - \hat{v}_j \approx 0$), allowing it to bypass the computationally expensive ``price wars" of the early augmentation phases and converge rapidly to the exact optimal assignment $X^*$.

\subsection{The Fallback Mechanism}
\label{subsec:fallback}

Neural networks, while powerful, can occasionally produce poor predictions. A ``bad seed"  is a set of potentials $(\hat{u}, \hat{v})$ that results in a sparse or disconnected equality subgraph $E_0 = \{(i,j) : C_{ij} - \hat{u}_i - \hat{v}_j = 0\}$. Initializing the LAPJV solver with such a seed could theoretically degrade performance by forcing the algorithm to perform excessive augmentation steps to repair the duals.

To ensure robustness, we implement a lightweight quality check before invoking the exact solver. We define the \textit{average degree} $\rho$ of the equality subgraph as:
\begin{equation}
    \rho = \frac{1}{N} \sum_{i,j} \mathbb{I}(|C_{ij} - \hat{u}_i - \hat{v}_j| < \epsilon)
\end{equation}
where $\epsilon$ is a numerical tolerance for floating-point comparisons. If $\rho$ falls below a validation-tuned threshold $\tau$, the system flags the prediction as unreliable. In such cases, the neural predictions are discarded, and the LAPJV solver is initialized with standard cold-start heuristics.

\textbf{Asymptotic Safety.}
This mechanism ensures that the neural overhead does not impact the asymptotic complexity of the solver. The running time overhead $T_{\text{overhead}}$, as detailed in App.~\ref{app:time_complexity}, scales as $\mathcal{O}(N^2\log(N))$. Since the exact solver running-time $T_{\text{cold}}$ has a worst-case complexity of $\mathcal{O}(N^3)$ regardless of the specific initialization of the dual variables, we have:
\begin{equation}
    \lim_{N \to \infty} \frac{T_{\text{overhead}}}{T_{\text{cold}}} = 0
\end{equation}
Thus, for sufficiently large instances, the cost of the safety check is negligible. Even when the fallback is triggered 100\% of the time or if the fallback mechanism fails to detect any bad seed, the total system runtime asymptotically converges to the baseline solver's performance.

\section{Methodological Guarantees}
\label{sec:theory}
In this section, we formalize the correctness of our framework and analyze the computational complexity of the inference pipeline compared to classical baselines. Unlike heuristic ``neural solvers" that predict assignments directly (often violating constraints), our approach is designed to maintain the invariants of exact optimization.

\subsection{Optimality by Construction}
A primary concern of many neural solvers is their lack of guaranteed optimality. To ensure our method never degrades into an approximation, we leverage the \textit{Min-Trick} mechanism to enforce dual feasibility by design.

\theoremstyle{definition}
\newtheorem{Proposition}{Proposition}

\begin{Proposition}
\label{prop:exactness}
\textbf{(Exactness Preservation).} 
For any arbitrary output $\hat{u} \in \mathbb{R}^N$ from RowDualNet, the warm-start seed $(\hat{u}, \hat{v})$ generated by the Min-Trick guarantees that the final assignment $X^*$ returned by the LAPJV solver is globally optimal.
\end{Proposition}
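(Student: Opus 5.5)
The argument splits into two parts: first, the Min-Trick always produces a dual-feasible seed, whatever $\hat{u}$ is; second, LAPJV's augmentation phase, started from any dual-feasible pair, keeps feasibility and ends with a perfect matching that satisfies complementary slackness. LP duality then gives global optimality.

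\textbf{Step 1 (feasibility of the seed).} Fix an arbitrary $\hat{u}\in\mathbb{R}^N$. Because $N$ is finite and $C$ is real, each $\hat{v}_j=\min_i (C_{ij}-\hat{u}_i)$ is a finite real number. For every pair $(i,j)$ we have $\hat{v}_j\le C_{ij}-\hat{u}_i$, so $r_{ij}=C_{ij}-\hat{u}_i-\hat{v}_j\ge 0$. Each column $j$ also gets at least one tight edge, namely the one at its argmin. So $(\hat{u},\hat{v})$ is dual feasible and $E_0$ touches every column. The fallback branch needs no separate treatment: the cold-start heuristics already yield feasible duals, since column reduction is itself a Min-Trick applied with $\hat{u}=0$.

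\textbf{Step 2 (invariants of the seeded solver).} We start from the empty matching, or from a partial matching contained in $E_0$. We then use the standard shortest-augmenting-path analysis of the Jonker--Volgenant algorithm. From a free row, a Dijkstra search runs on the nonnegative reduced costs $r_{ij}$. When it stops, the potentials are updated by the shortest-path distances: $v_j\leftarrow v_j-(d_{\min}-d_j)$ for scanned columns, with the matching change to the corresponding $u_i$. The textbook argument then shows two things:
\begin{enumerate}
\item all reduced costs stay $\ge 0$, which is the triangle inequality for shortest-path distances;
\item every edge on the augmenting path and every matched edge stays tight.
\end{enumerate}
Because all reduced costs are nonnegative, Dijkstra is valid; this is exactly why the seed must be feasible. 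Each augmentation increases the matching size by one, and the bipartite graph is complete, so an augmenting path always exists. After at most $N$ augmentations we therefore hold a perfect matching $X^*$ together with feasible $(u,v)$ such that $X^*_{ij}=1\Rightarrow u_i+v_j=C_{ij}$.

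\textbf{Step 3 (certificate).} Summing over the matched edges gives $\sum_{ij}C_{ij}X^*_{ij}=\sum_i u_i+\sum_j v_j$. By weak duality, this dual value is at most the cost of any permutation. Hence $X^*$ is optimal for the LP relaxation, and by total unimodularity (Preliminaries) it is optimal for the ILP. This is the complementary slackness criterion stated in the Preliminaries.

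\textbf{Main obstacle.} The real subtlety is Step 2. We must check that injecting arbitrary feasible duals keeps LAPJV's invariants intact. The initial partial matching must use only tight edges. If the injected state contains a matched pair with $r_{ij}>0$, I would simply start from the empty matching, or keep only tight pairs chosen greedily. Every invariant the augmentation proof relies on is then satisfied exactly as in the cold start. A second issue is the floating-point tolerance $\epsilon$: I would state the result in exact arithmetic, or for integer costs, and note that the tolerance affects only the fallback test, not correctness.
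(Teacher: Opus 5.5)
Your proposal is correct and follows the same route as the paper's proof: Min-Trick feasibility, then the fact that LAPJV's shortest-augmenting-path phase can resume from any valid dual-feasible state, then a duality certificate at termination. Your version is more careful than the paper's in two useful respects. You make explicit that any initial partial matching must lie in the equality subgraph $E_0$, which the paper's ``valid intermediate state'' tacitly assumes. You also close with weak duality plus complementary slackness, which suffices without the paper's appeal to strong duality, or indeed to total unimodularity, since $X^*$ is already a permutation.
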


\begin{proof}
The Min-Trick explicitly sets column potentials as $\hat{v}_j = \min_{i} (C_{ij} - \hat{u}_i)$. By construction, this satisfies $\hat{u}_i + \hat{v}_j \le C_{ij}$ for all entries, ensuring the seed is \textit{Dual Feasible}. Since LAPJV is a dual-ascent algorithm, the algorithm requires only a feasible dual solution to proceed. Consequently, initializing the solver with any valid dual potentials allows it to continue with the augmenting path phases exactly as it would after reaching such a state through its own internal computations. In this sense, the initialization is mathematically equivalent to resuming execution from a valid intermediate state. The solver proceeds via shortest-path augmentations until Complementary Slackness is satisfied. By the Strong Duality theorem of Linear Programming, the resulting assignment $X^*$ is guaranteed to be optimal. Therefore, the neural network influences only the \textit{convergence speed}, not the \textit{correctness} of the solution.
\end{proof}

\subsection{Time Complexity}
\label{sec:time_complexity}
The cold-start LAPJV algorithm has worst-case time complexity $\mathcal{O}(N^3)$. 
Our neural warm-start introduces an additional preprocessing overhead that is at most $T_{overhead} = \mathcal{O}(N^2\log(N))$, consisting of a single pass over the cost matrix to construct dual potentials and evaluate the fallback criterion.
As a result, the worst-case asymptotic complexity of the overall pipeline remains unchanged at $\mathcal{O}(N^3)$.

In practice, the warm-start reduces the number of augmentations required by the solver, leading to significant constant-factor speedups, as observed in Sec.~\ref{sec:experiments}. A detailed breakdown of the preprocessing costs is provided in App.~\ref{app:time_complexity}.

\section{Experiments}
\label{sec:experiments}

We evaluated our framework on synthetic datasets designed to challenge combinatorial solvers, as well as real-world tracking and transportation datasets \cite{shuai2022large, osm}. We focus primarily on three performance dimensions: \textbf{End-to-End Runtime} (including all neural overhead), \textbf{Stability} (runtime variance), and \textbf{Robustness} (generalization to out-of-distribution instances). 

Throughout all experiments, we also empirically verified the \textbf{Optimality Gap}. As guaranteed by our design, the gap remained exactly $0\%$ across all test instances evaluated, confirming that the neural warm-start strictly preserves the exactness of the solver.

\textbf{Hardware and Execution Model.}
All experiments were conducted on a single hybrid CPU–GPU node equipped with an \textit{NVIDIA GeForce RTX 3090} and a \textit{12th Gen Intel Core i7-12700}. Classical baselines (\texttt{SciPy} and \texttt{LAP}) are executed entirely on the CPU. Our method is a hybrid system: the neural warm-start (feature extraction and RowDualNet inference) is executed on the GPU, while the exact LAPJV solver runs on the CPU.

To ensure a fair comparison, we report end-to-end wall-clock runtime, including all components of the pipeline: data transfer between CPU and GPU, feature extraction, neural inference, Min-Trick computation, and execution of the exact solver. No component is excluded or amortized.

To prevent out-of-memory (OOM) errors on massive graphs, all PyTorch reductions for the Min-Trick are processed in chunked $B \times N$ blocks, allowing scalable execution well within standard GPU VRAM limits.

\subsection{Experimental Setup}
\textbf{Synthetic Datasets.} We utilized two distinct cost matrix distributions to test robustness:
\begin{itemize}[nosep]
    \item \textbf{Dense (Uniform) Model:} $C_{ij} \sim U(0,1)$. A standard benchmark for assignment algorithms. It is considered ``hard" for sparse heuristics because the cost structure is unstructured and fully dense, requiring the solver to explore many augmenting paths.
    \item \textbf{Block-Structured Model:} Adapted from \citet{dinitz2021faster}, this dataset simulates real-world scenarios where agents and tasks belong to specific classes (e.g., doctors are efficient at medical tasks but inefficient at engineering tasks). We generated $L$ groups of agents and tasks ($L = \left\lfloor N / 10 \right\rfloor$). Base costs between groups are drawn from a skewed discrete distribution, and specific instance costs are perturbed by Gaussian noise, creating a block-diagonal structure.
\end{itemize}

\textbf{Training Protocol.} We employed a Multi-Scale Training to encourage the learning of size-invariant feature representations. We trained a single RowDualNet model on over $1{,}700$ synthetically constructed cost matrices consisting of various sizes $N\times N$ with $N \in \{512, 1536, 2048, 3072\}$ sampled from the synthetic datasets. We then evaluated this single model on instances up to $N=16{,}384$ to test out-of-distribution scalability. 

\textbf{Baselines.} To ensure a rigorous evaluation, we compared our method against two standard exact solver implementations:
\begin{itemize}[nosep]
    \item \textbf{SciPy:} The \texttt{linear\_sum\_assignment} function from the \texttt{scipy.optimize} library~\cite{virtanen2020scipy}, which implements a modified Jonker-Volgenant algorithm with heuristic initialization.
    \item \textbf{LAP:} The optimized $C++$ implementation, \texttt{LAP} library~\cite{lap_github}, a state-of-the-art open-source solver for dense linear assignment.
\end{itemize}
We compared them both against our method evaluating how their relative performance varies depending on problem size and sparsity. By reporting speedups relative to both baselines in each experiment, we ensure that our results reflect true state-of-the-art improvements rather than artifacts of a specific implementation.

\textbf{Evaluation Protocol.} 
Our goal is to measure whether neural dual warm-starting reduces the total time required to solve the assignment problem exactly. Accordingly, we compared our method against standard cold-start solvers as complete systems, rather than attempting to isolate solver-only runtime. This reflects the intended deployment setting, where a lightweight learned component accelerates a classical exact algorithm without compromising optimality. For full details on the model architecture, training curriculum, and hyperparameters, see App.~\ref{app:hyperparams}.

\subsection{Main Results: End-to-End Runtime}
\label{subsec:Runtime}

\subsubsection{Synthetic Cost Matrices}
\label{subsec:Synthetic_experiment}

To ensure robust evaluation, we benchmarked performance across problem sizes ranging from $N=512$ to $N=16{,}384$. For each size and distribution (Dense and Block-Structured), we generated $10$ independent random cost matrices. We measured the end-to-end wall-clock time for all methods on these identical instances.

The reported metric is the Mean of Ratios: for each instance $i$, we calculated the individual speedup $S_i = T_{baseline}^{(i)} / T_{ours}^{(i)}$ and reported the arithmetic mean across the 10 trials, preventing outliers in absolute runtime from skewing the results, accurately reflecting the expected time savings per job. The shaded regions in the plots represent the {95\% Confidence Interval (CI)}, illustrating the stability of our method compared to the variance of the baselines.
Fig.~\ref{fig:scalability_dense} illustrates the speedup on the Dense Model dataset, while Fig.~\ref{fig:scalability_block} presents the results for the Block-Structured Model.



\begin{figure*}[t] 
 \centering
 \begin{minipage}[t]{0.48\textwidth}
   \centering
   \includegraphics[width=\linewidth]{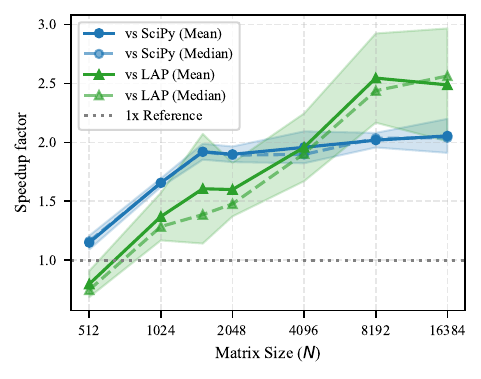} 
   \caption{\textbf{Scalability on Dense Model.} End-to-end speedup factor ($T_{baseline}/T_{ours}$) versus matrix size $N$ on dense matrices (Mean of Ratios $\pm$ 95\% CI). The solid lines indicate the mean speedup, while the shaded regions represent the 95\% CI. Our Neural Warm-Start framework consistently outperforms both Cold-Start \texttt{LAP} and \texttt{SciPy} when $N \ge 1{,}024$, reaching a peak speedup of $\approx 2.5{\times}$. The widening shaded area for \texttt{LAP} (green) reflects the high variance of the baseline solver compared to the stability of our method.}
   \label{fig:scalability_dense}
 \end{minipage}
 \hfill
 \begin{minipage}[t]{0.48\textwidth}
   \centering
   \includegraphics[width=\linewidth]{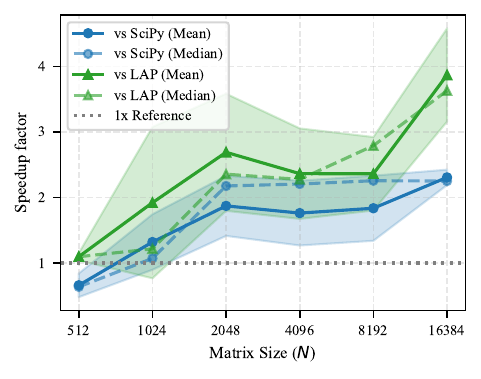}
   \caption{\textbf{Scalability on Block-Structured Model.} End-to-end speedup factor ($T_{baseline}/T_{ours}$) versus matrix size $N$ on Block-Structured Model (Mean of Ratios $\pm$ 95\% CI). We achieve substantial acceleration, reaching a mean speedup of $\approx 2.25{\times}$ against \texttt{SciPy} (blue) and nearly $\approx 4.0{\times}$ against \texttt{LAP} (green) on the largest instances. The large spread in the CI's (particularly for \texttt{LAP}) highlights the extreme volatility of cold-start baselines on structured data, which our method effectively stabilizes.}
   \label{fig:scalability_block}
 \end{minipage}
\end{figure*}

\textbf{Analysis.} Our method achieves a consistent end-to-end speedup factor $>1.0$ for all instances with $N \ge 1{,}024$. For smaller sizes ($N=512$), the fixed overhead of neural inference constitutes a larger fraction of the runtime. However, the performance gap widens as problem size increases.
On the \textbf{Dense Model} (Fig.~\ref{fig:scalability_dense}), we achieve an average speedup of $\approx 2.0{\times}$ against \texttt{SciPy} and $\approx 2.5{\times}$ against \texttt{LAP} at $N=16{,}384$. The gains are even more pronounced on the \textbf{Block-Structured Model} (Fig.~\ref{fig:scalability_block}), where the solver effectively exploits the group structure to reach a mean speedup of $\approx 2.25{\times}$ against \texttt{SciPy} and nearly $\approx 4.0{\times}$ against the \texttt{LAP} baseline. Although the large confidence intervals indicate significant variance in the baseline's performance, the robust median speedup ($\approx 2.4{\times}$) confirms that our method consistently outperforms the baseline in the majority of trials. This variance is driven by the instability of classical solvers on structured inputs, a phenomenon we analyze further in the stability analysis (Sec.~\ref{subsec:Runtime_Stability}).

\subsubsection{Real-World Generalization: Urban Transportation Networks}
\label{subsec:traffic_engineering}
The LAP is a core primitive in traffic engineering and logistics, specifically within the \textbf{Transportation Problem} \cite{hitchcock1941distribution}, which minimizes the cost of distributing commodities from supply sources to destinations. When supply and demand are balanced, this reduces to a bipartite matching problem on the underlying road network. Accelerating these computations is central for logistics operations and dynamic routing in supply chains.

To validate our approach, we constructed cost matrices using the OpenStreetMap (OSM) dataset \cite{osm}. For seven major metropolises (Beijing, New York, Tokyo, Shenzhen, Berlin, Toronto, and Rome) we randomly sampled $N=10{,}000$ locations and computed the pairwise shortest-path distances between them. This yields $N {\times} N$ cost matrices that reflect the specific transportation cost of urban infrastructure. We evaluated performance across 10 random instances per city to ensure statistical significance. Crucially, we tested our pre-trained RowDualNet model (trained only on synthetic data) directly on these matrices \textit{without fine-tuning}. 

\textbf{Analysis.} Our method generalizes well to the evaluated networks for large datasets. In Fig.~\ref{fig:traffic_results}, we observe a consistent speedup of $\approx 1.4{\times}$ -- $1.6{\times}$ relative to \texttt{SciPy}'s solver and a major speedup  $\approx 1.3{\times}$ -- $1.8{\times}$ relative to the \texttt{LAP} solver across all cities. Notably, these gains are sustained despite the specific spatial correlations of urban graphs, which differ significantly from our synthetic training data. Acceleration in diverse scenarios underscores that RowDualNet captures universal structural properties of assignment costs rather than overfitting to idealized distributions, offering a valuable performance advantage for large logistics optimization.

\subsubsection{Real-World Generalization: Multi-Object Tracking}
\label{subsec:mot_experiment}

 We evaluated performance on an additional application of cost matrices derived from the Large Scale Real-World Multi-Person Tracking dataset \cite{shuai2022large}. In Multi-Object Tracking (MOT), the association of detected objects across video frames is formulated as a LAP, where the cost $C_{ij}$ represents the distance $1 - \text{IoU}$ (Intersection over Union) between a tracklet in frame $t$ and a detection in frame $t+1$.
We extracted cost matrices from the dataset with sizes ranging from $N=1{,}000$ to $N=16{,}000$. Again, we tested our pre-trained RowDualNet model (trained only on synthetic data) directly on these matrices \textit{without fine-tuning}.

\begin{figure*}[t] 
  \centering
  \begin{minipage}[t]{0.48\textwidth}
    \centering
    \includegraphics[width=\linewidth]{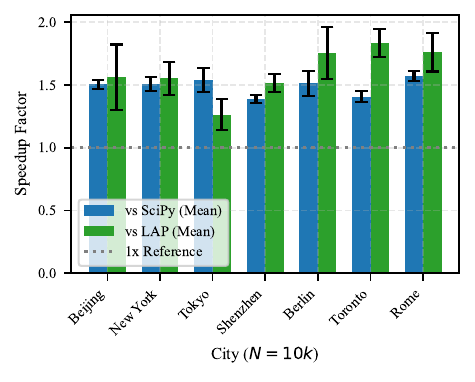} 
    \caption{\textbf{Generalization to Urban Transportation Networks.} End-to-end speedup factor on cost matrices derived from OpenStreetMap (OSM) for $N=10{,}000$ locations in major global metropolises. Our method achieves consistent acceleration ($\approx 1.2{\times}$ -- $1.8{\times}$) across diverse urban topologies, demonstrating robustness to real-world road network structures without fine-tuning. The error bars indicate the 95\% CI over 10 random instances.}
    \label{fig:traffic_results}   
  \end{minipage}
  \hfill
  \begin{minipage}[t]{0.48\textwidth}
    \centering
    \includegraphics[width=\linewidth]{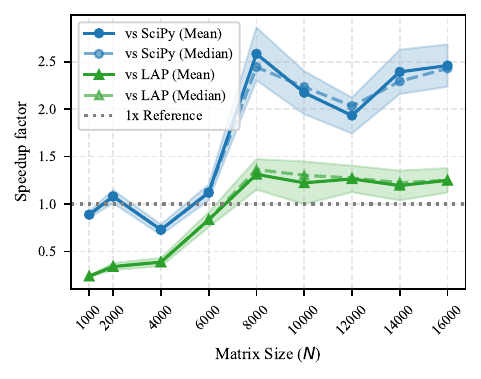} 
    \caption{\textbf{Generalization to Real-World Multiple Object Tracking.} End-to-end speedup factor on cost matrices extracted from the Large-Scale MOT dataset. The solid lines indicate the mean speedup, while the shaded regions represent the 95\% CI. Even without fine-tuning, our model generalizes to the structured costs of vision-based tracking, achieving a consistent speedup of $\approx 1.25{\times}$ for $N\ge8{,}000$ over the fastest baseline (\texttt{LAP}).}
    \label{fig:mot_results}
  \end{minipage}
\end{figure*}

\textbf{Analysis.} As shown in Fig.~\ref{fig:mot_results}, our method generalizes remarkably well to the MOT distribution. We observe a consistent speedup $\approx 1.25{\times}$ relative to the faster of the baselines (\texttt{LAP}) for $N\ge8{,}000$. We note that this speedup is naturally lower than that achieved on synthetic data: since MOT involves tracking objects between adjacent frames, the resulting matrices are relatively sparse. This sparsity presents a more difficult setting for our dense neural predictor compared to classical heuristics, yet our method still yields a distinct performance improvement. Notably, for \texttt{SciPy}'s solver, a major speedup of $\approx 2{\times}$ is obtained. This result is significant, because real-world tracking matrices often contain complex correlations (e.g., spatial locality) that differ from our synthetic training data. The fact that RowDualNet accelerates these instances zero-shot confirms that it learns fundamental properties of the assignment structure rather than overfitting to synthetic distributions.

\subsection{Runtime Analysis}
\label{subsec:runtime}

\textbf{Runtime Breakdown and Overhead Analysis.}
To verify that the neural warm-start does not dominate runtime, we performed a detailed breakdown of the end-to-end execution time. Table~\ref{tab:runtime_breakdown_dense} reports the wall-clock time spent in each stage of the pipeline for the Dense Model. 
The results for Block-Structured Model (in App.~\ref{app:runtime_block})  
exhibits nearly identical characteristics.
The pipeline consists of five stages:

\begin{enumerate}[nosep]
    \item \textbf{Data Transfer:} Moving the cost matrix from CPU to GPU and back.
    \item \textbf{Row Features:} Extracting the row-centric features.
    \item \textbf{RowDualNet:} The forward pass of RowDualNet.
    \item \textbf{Min-Trick:} Calculating $\hat{v}$ via the output of RowDualNet ($\hat{u}$).
    \item \textbf{Seeded LAP:} Exact solver seeded with our duals.
\end{enumerate}

\begin{table*}[b]
\centering
\caption{Average Runtime breakdown (ms) and percentage of total time for different matrix sizes $N\times N$ in Dense Model matrices.}
\label{tab:runtime_breakdown_dense}
\begin{scriptsize}
\setlength{\tabcolsep}{3pt}
\begin{tabular}{lrrrrrrrrrrrrrr}
\toprule
Component & \multicolumn{2}{c}{$N=512$} & \multicolumn{2}{c}{$N=1024$} & \multicolumn{2}{c}{$N=1536$} & \multicolumn{2}{c}{$N=2048$} & \multicolumn{2}{c}{$N=4096$} & \multicolumn{2}{c}{$N=8192$} & \multicolumn{2}{c}{$N=16384$} \\
\cmidrule(lr){2-3} \cmidrule(lr){4-5} \cmidrule(lr){6-7} \cmidrule(lr){8-9} \cmidrule(lr){10-11} \cmidrule(lr){12-13} \cmidrule(lr){14-15}
 & Time & \% & Time & \% & Time & \% & Time & \% & Time & \% & Time & \% & Time & \% \\
\midrule
Data Transfer & 0.25 & 5.2 & 0.93 & 5.1 & 2.21 & 5.5 & 4.39 & 5.6 & 31.9 & 7.8 & 121 & 5.9 & 516 & 4.8 \\
Row Features & 0.62 & 12.8 & 2.21 & 12.1 & 1.67 & 4.1 & 2.39 & 3.1 & 7.52 & 1.8 & 66.8 & 3.3 & 181 & 1.7 \\
RowDualNet & 0.65 & 13.5 & 0.95 & 5.2 & 1.60 & 4.0 & 1.61 & 2.1 & 3.32 & 0.8 & 8.10 & 0.4 & 22.6 & 0.2 \\
Min-Trick & 0.09 & 1.8 & 0.09 & 0.5 & 0.11 & 0.3 & 0.15 & 0.2 & 0.34 & 0.1 & 1.07 & 0.1 & 3.87 & 0.0 \\
Seeded LAP & 3.21 & 66.6 & 14.1 & 77.0 & 34.7 & 86.1 & 69.8 & 89.1 & 368 & 89.5 & 1845 & 90.4 & 10138 & 93.3 \\
\midrule
Total (ms) & 4.82 & 100.0 & 18.2 & 100.0 & 40.3 & 100.0 & 78.3 & 100.0 & 411 & 100.0 & 2042 & 100.0 & 10861 & 100.0 \\
\bottomrule
\end{tabular}
\end{scriptsize}
\end{table*}

\textbf{Analysis.} As shown in Table~\ref{tab:runtime_breakdown_dense}, the relative overhead of the neural components decreases as problem size increases. For massive dense instances ($N=16{,}384$), the neural inference (Features + Model + Data transfer) consumes less than $7\%$ of the total runtime, while the exact solver accounts for $93\%$. Similarly, for Block-Structured matrices (see Table~\ref{tab:runtime_breakdown_block} in App.~\ref{app:runtime_block}), the neural overhead is even lower, constituting less than $5\%$ of the total runtime. Importantly, even when accounting for GPU acceleration, the dominant cost remains the CPU-based exact solver, indicating that the observed speedups stem from algorithmic improvements (fewer augmentations and dual updates) rather than from simple offloading to specialized hardware.

\subsection{Runtime Stability}
\label{subsec:Runtime_Stability}
A key finding is that classical baseline runtime shows extreme variance across random seeds and matrix permutation (see App.~\ref{app:variance}). This volatility stems from the sensitivity of augmenting-path algorithms to adversarial cost configurations, ``unlucky" memory layouts can trigger worst-case pathfinding behavior, causing runtime spikes of up to $11{\times}$ (ratio of worst-case to best-case runtime). In contrast, our Neural Warm-Start consistently initializes the solver in a high-quality state, effectively stabilizing the optimization process. Neural Dual Warm-Starting not only improves mean runtime but drastically reduces the coefficient of variation (from $\approx 45\%$ to $\approx 30\%$), offering the predictable latency profile required for safety-critical real-time systems.

\subsection{Mechanism and Ablation Analysis}
To understand the source of the observed speedups and validate the necessity of our deep architecture, we conducted an extensive internal analysis (detailed in App.~\ref{app:work_reduction} and App.~\ref{app:ablation}).

\textbf{Work Reduction.} We found that the primary driver of acceleration is the quality of the initial equality subgraph. While standard cold-start heuristics resolve only $\approx 26\%$ of assignments in the greedy phase, our neural warm-start allows the solver to match $\approx 76\%$ of assignments immediately (App.~\ref{app:work_reduction}), drastically pruning the search space, reducing the number of computationally expensive augmenting path searches by $\approx 68\%$ across all problem sizes.

\textbf{Necessity of Deep Learning (Ablation Study).} In App.~\ref{app:ablation}, we compared RowDualNet against linear regression and statistical heuristics (Random, Row Mean). Simple heuristics consistently failed to outperform the cold-start baseline (speedup ${<} 1.0$). Furthermore, while linear models provided marginal gains at small scales, they failed to generalize to large instances, degrading below baseline performance for $N > 4{,}096$. Confirming that the non-linear feature extraction of RowDualNet is essential for capturing the complex dynamics of large-scale assignment problems.

\subsection{Robustness Analysis}
\label{subsec:Robustness_Analysis}

\textbf{Implementation Details.} We set the numerical tolerance $\epsilon = 10^{-5}$ to account for $32$-bit floating-point precision. The density threshold was set to $\tau = 1.2$ (implying an expectation of at least $1.2$ valid edges per row), calibrated to optimally balance safety and performance on a held-out validation set.

\textbf{Empirical Safety and Sensitivity.} During standard evaluation across the Dense, Block-Structured, and Real-World distributions, the fallback mechanism was rarely triggered. To ensure this low trigger rate is a result of high-quality dual predictions rather than an overly conservative threshold, we conducted a sensitivity analysis (detailed in App.~\ref{app:fallback_test}). By artificially perturbing optimal duals with Gaussian noise, we confirmed a direct correlation: as prediction quality drops, the equality subgraph density $\rho$ falls below $\tau$ precisely as solver runtime begins to degrade. This confirms the density criterion is a highly responsive quality metric.


\section{Comparison with \citet{dinitz2021faster} (Learned duals)} 
\label{app:dinitz_results}
To empirically validate our architectural choices against the ``Learned Duals" framework \cite{dinitz2021faster} (see Section \ref{sec:related_dinitz} for an extended literature discussion), we implemented their ``Learned Median" baseline. This method initializes duals using the coordinate-wise median of optimal duals observed in the training set.

Fig.~\ref{fig:dinitz_comparison_block} and Fig.~\ref{fig:dinitz_comparison_dense} compare the speedup of our Neural Warm-Start against this baseline on both Block-Structured Model and Dense Model. 

\begin{figure*}[t] 
  \centering
  \begin{minipage}[t]{0.48\textwidth}
    \centering
    \includegraphics[width=\linewidth]{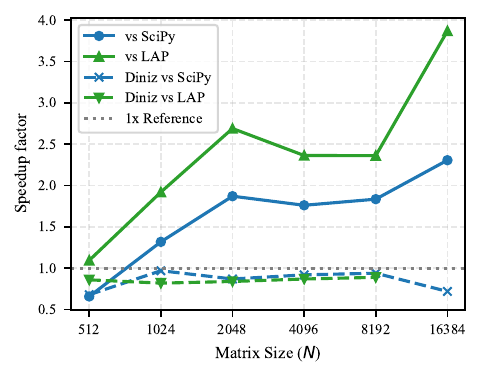} 
    \caption{\textbf{Comparison with Dinitz et al. (2021) on Block-Structured Model.} Speedup factor relative to Cold-Start baselines (average of each problem size). On the Block-Structured Model dataset, the static ``Learned Median" baseline from \cite{dinitz2021faster} (dashed lines) fails to scale, dropping below $1.0{\times}$ for all instances. In contrast, Neural Dual Warm-Starting (solid lines) achieves superior acceleration, reaching up to $\approx 3.5{\times}$ against \texttt{LAP}.}
    \label{fig:dinitz_comparison_block}   
  \end{minipage}
  \hfill
  \begin{minipage}[t]{0.48\textwidth}
    \centering
    \includegraphics[width=\linewidth]{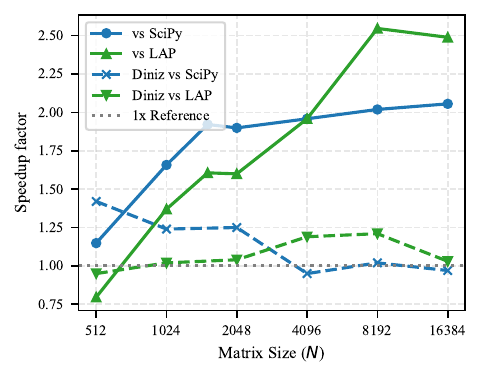} 
    \caption{\textbf{Comparison with Dinitz et al. (2021) on Dense Model.} Speedup factor relative to Cold-Start baselines (average of each problem size). On the Dense Model dataset, the Dinitz baseline provides negligible gain, while our method maintains a robust speedup of $\approx 2.0{\times}$ -- $2.5{\times}$.}
    \label{fig:dinitz_comparison_dense}
  \end{minipage}
\end{figure*}

\textbf{Analysis.} On the \textbf{Block-Structured Model} dataset (Fig.~\ref{fig:dinitz_comparison_block}), the static ``Learned Median" baseline from \cite{dinitz2021faster} fails to learn, dropping below $1.0{\times}$ in all instances. On the \textbf{Dense Model} dataset (Fig.~\ref{fig:dinitz_comparison_dense}), the Dinitz baseline provides negligible gain for larger instances, while our method maintains a robust $\approx 2.0{\times}$ speedup. The static median cannot account for the specific random noise realizations of each instance. It biases the solver towards a ``mean" configuration that may be suboptimal for the specific graph. In contrast, RowDualNet adapts to the specific features of each instance, yielding superior performance in both cases.

\section{Related Work}
\label{sec:related}

Our work sits at the intersection of deep learning and combinatorial optimization.
Recent works have demonstrated the versatility of learning-augmented methods in combinatorial optimization settings~\cite{mitzenmacher2022algorithms,KhodakBTV22,DinitzILMV22,DinitzILMNV24}. Further details about these publications can be found in App.~\ref{app:extended_related} as well as a discussion  about classical baselines.
Specifically, for the \texttt{LAP} framework, we categorize approaches into Neural Solvers (primal) and Warm-Start methods (dual). 

\subsection{Neural Combinatorial Optimization (Primal)}
A growing body of work attempts to replace classical solvers with deep learning models, by formulating the LAP as a link prediction task.
\citet{liu2024glan} propose a graph network that predicts the permutation directly. \citet{aironi2024graph} utilize Message-Passing GNNs for edge classification, while \citet{loveland2025magnet} employ line-graph transformations for multi-task assignment. \citet{pan2024hybridgnn} introduced a self-supervised approach for bipartite matching.

While fast, these ``neural solvers" suffer from two critical limitations. First, they are \textit{approximations}, they treat hard constraints as soft loss terms or rely on greedy post-processing, failing to guarantee exact optimality. Second, they struggle with \textit{scalability}. Architectures relying on edge convolutions (e.g., MAGNET \cite{loveland2025magnet} or HybridGNN \cite{pan2024hybridgnn}) require $\mathcal{O}(N^2)$ memory, causing exhaustion on instances larger than $N \approx 2{,}000$. In contrast, our Neural Dual Warm-Starting architecture scales efficiently to large dense instances of $N=16{,}384$.

\subsection{Learning to Warm-Start (Dual)}
\label{sec:related_dinitz}
Closer to our method is the approach of using machine learning to guide exact algorithms. The foundational work of \citet{dinitz2021faster} introduced ``learned duals" to warm-start the Hungarian algorithm.

\textbf{Theoretical Extensions.}
Subsequently, \citet{sakaue2022discrete} utilized Discrete Convex Analysis (DCA) to establish tighter bounds based on the predicted duals, while \citet{chen2022faster} proposed a refined algorithm with error-dependent complexity scaling with $\mathcal{O}(\sqrt{N})$. However, these works primarily focus on the \textit{solver side},  proving worst-case guarantees assuming the existence of a predictor, and evaluate on small-scale instances ($N \approx 1{,}000$).

Our work complements this theoretical progress by addressing the \textit{learning side}. We identify three practical limitations in prior frameworks:
\begin{enumerate}[nosep]
    \item \textbf{Feasibility Overhead:} Prior methods rely on iterative algorithms to repair infeasible predictions. We bypass this via the constructive Min-Trick.
    \item \textbf{Expressivity:} Statistical heuristics (e.g., medians) relied upon by \citet{dinitz2021faster} fail on unstructured or row-permuted data (as shown in Sec.~\ref{app:dinitz_results}). RowDualNet captures complex instance-specific dependencies.
    \item \textbf{Scale:} We demonstrate robustness on matrices up to $N=16{,}384$, substantially larger than previous studies.
\end{enumerate}

\section{Conclusion}
\label{sec:conclusion}

We presented \textbf{Neural Dual Warm-Starting}, a learning-augmented framework that accelerates exact LAP solvers by predicting continuous dual potentials. Unlike primal-based neural solvers, our approach guarantees feasibility and scales to $N=16{,}384$ without exhausting GPU memory. Empirically, our method yields end-to-end speedups of up to \textbf{4${\times}$} on structured synthetic instances and demonstrates robust zero-shot generalization to real-world transportation and tracking tasks ($\approx$ \textbf{1.25${\times}$} -- \textbf{1.8${\times}$}). Crucially, it stabilizes runtime performance, drastically reducing the extreme variance observed in classical solvers.

Future work includes extending this ``dual warm-start" paradigm to other combinatorial problems such as Min-Cost Flow and exploring semi-supervised training regimes where optimal duals are not available. By bridging the gap between the speed of deep learning and the guarantees of exact optimization, our framework paves the way for real-time decision-making in large-scale industrial systems.

\newpage

\section*{Acknowledgements}
The research of I. Yavlovich and N. Weinberger was supported by the United States -- Israel Binational Science Foundation (NSF-BSF), grant no. 2024763.

\section*{Impact Statement}
This paper presents work whose goal is to advance the field of Machine Learning and Combinatorial Optimization. Our primary contribution is a framework for accelerating exact solvers for the Linear Assignment Problem (LAP), a fundamental algorithmic primitive.

There are many potential societal consequences of our work, both positive and negative. On the positive side, the LAP is central to logistics, supply chain management, and resource allocation. Accelerating these computations at scale can lead to significant efficiency gains in transportation networks, potentially reducing fuel consumption and carbon footprints. Furthermore, faster assignment algorithms facilitate more responsive real-time decision-making in emergency response and disaster relief logistics.

On the negative side, the LAP is a core component of Multi-Object Tracking (MOT) systems used in computer vision. Improvements in the speed and scalability of exact matching could inherently enhance the capabilities of automated surveillance and personnel tracking technologies. While our work focuses on the algorithmic and mathematical foundations of optimization, we acknowledge that the downstream deployment of such efficient tracking systems warrants careful ethical consideration and regulation to protect privacy.

\bibliography{references}
\bibliographystyle{icml2026}

\appendix
\onecolumn 
\raggedbottom

\section*{Appendix Outline}

This appendix provides supplementary theoretical analysis, implementation details, and extended experimental results to support the main paper. We begin by formally analyzing the asymptotic time complexity of our pipeline in App.~\ref{app:time_complexity}, followed by detailed definitions of our input features and training hyperparameters in App.~\ref{app:features} and App.~\ref{app:hyperparams}. 

To validate the source of our performance gains, we present the detailed runtime breakdown for block-structured matrices in App.~\ref{app:runtime_block}, investigate the internal reduction of the solver's search space in App.~\ref{app:work_reduction}, and analyze the effect of matrix sparsity on overall acceleration in App.~\ref{app:sparsity}. We further justify our architectural choices through extensive ablation studies in App.~\ref{app:ablation}, which evaluate alternative warm-start strategies and classical subgradient initialization, followed by an analysis of feature dimension selection in App.~\ref{app:Feature_D_S}. Finally, we comprehensively evaluate the system's reliability and safety through a runtime variance analysis in App.~\ref{app:variance}, a sensitivity of the density test in App.~\ref{app:fallback_test}, hyperparameter stability checks in App.~\ref{app:topk_sensitivity}, and a verification of permutation invariance in App.~\ref{app:permutation}, concluding with an extended discussion of related work in App.~\ref{app:extended_related}.

\section{Detailed Time Complexity Analysis}
\label{app:time_complexity}
We analyze the computational cost of our pipeline to demonstrate that the neural overhead is negligible for large-scale instances. The standard cold-start LAPJV algorithm has a worst-case time complexity of $\mathcal{O}(N^3)$.

The overhead introduced by our Neural Warm-Start consists of three components:
\begin{enumerate}[nosep]
    \item \textbf{Feature Extraction:} Computing row statistics requires scanning the full cost matrix once, an $\mathcal{O}(N^2\log(N))$ operation (worst-case due to sorting each row).
    \item \textbf{RowDualNet Inference:} Our architecture processes $N$ rows in parallel using a shared MLP with fixed hidden dimension $H$. The complexity is $\mathcal{O}(N \cdot H)$, which is linear with respect to the number of agents.
    \item \textbf{Min-Trick:} Computing $\hat{v}$ requires finding the minimum over each column, which involves scanning the matrix, an $\mathcal{O}(N^2)$ operation.
    \item \textbf{Fallback Mechanism:} Computing the assignment average degree $\rho$ requires evaluating the reduced cost condition for edges in the matrix, an $\mathcal{O}(N^2)$ operation.
\end{enumerate}

The total worst-case overhead is $T_{overhead} = \mathcal{O}(N^2\log(N))$. Since the solver complexity is $\mathcal{O}(N^3)$, the overhead is asymptotically dominated by the solver’s $\mathcal{O}(N^3)$ runtime as $N$ increases.
In practice, the warm-start reduces the constant factor of the solver's $\mathcal{O}(N^3)$ runtime by initializing the search closer to the optimum, resulting in the net speedups observed in our experiments in Sec. \ref{sec:experiments}.

\section{Detailed Feature Definitions}
\label{app:features}

Our \textbf{Neural Dual Warm-Starting} architecture relies on a compact set of 21 row-centric features, listed in Table~\ref{tab:features_detailed}, to predict the dual potentials $\hat{u}_i$. These features are designed to be invariant to column permutations, ensuring that the model generalizes across different task orderings. We categorize them into four groups: Distributional Statistics, Ambiguity Metrics, Competitiveness Signals, and Positional Encodings.

\textbf{Remark on Permutation Invariance.} While our row-centric features are designed to be column-permutation invariant, the inclusion of sinusoidal Positional Encodings (PE) technically breaks row-permutation invariance. We include PEs to break symmetry in degenerate cases (e.g., identical cost rows in synthetic data), where a strictly invariant network would be forced to predict identical potentials, potentially hindering the solver. In practice, we observe that the network learns to rely primarily on cost statistics, using PEs only as a tie-breaking signal.

\begin{table}[ht]
    \centering
    \caption{Details of the 21 input features used in the Neural Dual Warm-Starts.}
    \label{tab:features_detailed}
    \renewcommand{\arraystretch}{1.3} 
    \begin{tabular}{p{2.5cm}|c|p{3.5cm}|p{8.5cm}}
        \toprule
        \textbf{Category} & \textbf{Dim} & \textbf{Feature Name} & \textbf{Description} \\
        \midrule
        \multirow{4}{*}{\textbf{Distribution}} & \multirow{4}{*}{4} & \texttt{row\_min} & The minimum cost in the row ($C_{i, \min}$). \\
        & & \texttt{row\_max} & The maximum cost in the row. Provides the scale/range of costs. \\
        & & \texttt{row\_mean} & The arithmetic mean of all costs for agent $i$. \\
        & & \texttt{row\_std} & The standard deviation of costs, indicating cost variability. \\
        \midrule
        \multirow{2}{*}{\textbf{Ambiguity}} & \multirow{2}{*}{2} & \texttt{entropy} & Shannon entropy of the softmax-normalized row costs. A high entropy indicates an agent is ``indifferent" between many tasks while low entropy indicates a strong preference for specific tasks. \\
        & & \texttt{difficulty} & Inverse of the mean gap between sorted costs. Captures how ``clumped" the costs are (harder to distinguish). \\
        \midrule
        \multirow{2}{*}{\textbf{Competition}} & \multirow{2}{*}{2} & \texttt{near\_best} & The fraction of tasks with costs within 10\% of the row minimum. Indicates flexibility (how many ``good" options exist). \\
        & & \texttt{is\_col\_best} & The normalized count of columns $j$ for which this row $i$ holds the global minimum cost ($C_{ij} = \min_k C_{kj}$). For capturing global competition without explicit message passing. \\
        \midrule
        \multirow{5}{*}{\textbf{Local Context}} & \multirow{5}{*}{5} & \texttt{k\_mean} & Mean of the $K=10$ smallest costs. Focuses the network on the ``relevant" cheapest tasks rather than the global average. \\
        & & \texttt{k\_std} & Standard deviation of the $K=10$ smallest costs. \\
        & & \texttt{rank\_mean} & The average rank of this agent across all columns. \\
        & & \texttt{rank\_std} & Standard deviation of the agent's ranks. \\
        & & \texttt{norm\_rank} & Normalized rank score, providing a smoothed competitiveness metric. \\
        \midrule
        \textbf{Positional} & 8 & \texttt{sin/cos} encodings & Sinusoidal position embeddings for row indices (freqs 1, 2, 4, 8). Allows the network to distinguish between agents that might have identical cost statistics. \\
        \midrule
        \textbf{Total} & \textbf{21} & & \\
        \bottomrule
    \end{tabular}
\end{table}

\newpage
\section{Detailed Architecture and Training}
\label{app:hyperparams}
\begin{figure*}[t] 
    \centering
    \begin{tikzpicture}[
    node distance = 1.5cm and 0.8cm,
    blue/.style={
        rectangle, 
        draw=blue!50!black, 
        fill=blue!5, 
        thick,
        rounded corners, 
        text centered, 
        minimum size=2cm, 
        align=center,
        font=\sffamily\large
    },
    green/.style={
        rectangle, 
        draw=green!50!black, 
        fill=green!5, 
        thick,
        rounded corners, 
        text centered, 
        minimum size=2cm, 
        align=center,
        font=\sffamily\large
    },
    orange/.style={
        rectangle, 
        draw=orange!50!black, 
        fill=orange!5, 
        thick,
        rounded corners, 
        text centered, 
        minimum size=2cm,, 
        align=center,
        font=\sffamily\large
    },
    red/.style={
        rectangle, 
        draw=red!50!black, 
        fill=red!5, 
        thick,
        rounded corners, 
        text centered, 
        align=center,
        font=\sffamily\small
    },
    sum/.style={
        circle, 
        draw, 
        fill=gray!20, 
        minimum size=6mm, 
        inner sep=0pt
    },
    arrow/.style={
        ->, 
        >={Stealth[scale=1.2]}, 
        thick, 
        rounded corners,
        color=gray!40!black
    },
    edge label/.style={
        font=\footnotesize\sffamily,
        pos=0.5,
        align=center,
        fill=white, 
        inner sep=2pt
    }
]


\node (RF) [blue] {\textbf{Row Features}\\($F$)};

\node (IP) [green, right=of RF] {\textbf{Input Projection}\\ \normalsize Linear $\to$ GELU \\ \normalsize$\to$ LayerNorm};

\node (RB) [green, right=of IP] {\textbf{Residual Blocks} \\ \normalsize MLP $\to$ Skip connection};

\node (DERC) [orange, right=of RB] {\textbf{Dual Estimate} \\($u_{pre}$)\\ \textbf{Reduced Cost}\\ ($C_{ij} - u_{pre,i}$)};

\node (SRM) [red, right=of DERC] {\textbf{Sparse Refine Mechanism}\\
        \small{1. Top-K Selection}\\ \small{2. Edge MLP Transform} \\ \small{3. Aggregation (Weighted Sum)}};

\node (sum) [sum, right=of SRM] {$+$};

\node (OH) [green, right=of sum] {\textbf{Output Head}\\ \normalsize MLP $\to$ Masking};

\node (FD) [blue, right=of OH] {\textbf{Final Duals}\\ \LARGE ($\hat{u}$)};


\draw [arrow] (RF) -- (IP);
\draw [arrow] (IP) -- (RB);
\draw [arrow] (RB) -- (DERC);
\draw [arrow] (DERC) -- (SRM);
\draw [arrow] (SRM) -- (sum);
\draw [arrow] (sum) -- (OH);
\draw [arrow] (OH) -- (FD);

\draw [arrow] (RB.south) -- ++(0,-0.8cm) -| (sum.south) 
    node[edge label, pos=0.25, below] {};

\end{tikzpicture} 
    \caption{\textbf{RowDualNet Architecture.} The model processes row features through a residual MLP to produce an initial dual estimate $\hat{u}_{pre}$. The Sparse Refine Mechanism (center) then injects global context by inspecting the top-$K$ most competitive columns (lowest reduced cost) for each agent, allowing the network to adjust bids based on contention before the final output.}
    \label{fig:rowdualnet_arch}
\end{figure*}

We train RowDualNet (illustrated in Fig.~\ref{fig:rowdualnet_arch}) using a supervised approach with ground-truth duals generated by the LAPJV solver.

\begin{itemize}[nosep]
    \item \textbf{Architecture:} A residual MLP with 3 blocks. Hidden dimension $H=192$.
    \item \textbf{Optimizer:} AdamW with weight decay $1 \times 10^{-4}$.
    \item \textbf{Learning Rate:} Initial LR $1 \times 10^{-3}$ with a ReduceLROnPlateau scheduler (factor 0.5, patience 10).
    \item \textbf{Loss Function:} $\mathcal{L} = \text{MAE}(\hat{u}, u^*) + \lambda \sum_{(i,j) \in M^*} \text{ReLU}(C_{ij} - \hat{u}_i - \hat{v}_j)$, where the second term serves as a Complementary Slackness regularizer to enforce structural validity by minimizing the violation of primal-dual tightness on ground-truth optimal edges.
    \item \textbf{Global Context:} We use Top-K pooling with $K=16$ during the \texttt{sparse\_refine} step to inject column-side competition info.
\end{itemize}

\textbf{Optimization Note.} The Min-Trick operation $v_j = \min_i (C_{ij} - u_i)$ is continuous and differentiable almost everywhere. During backpropagation, gradients flow through the $\arg\min$ index $i^*$ for each column ($v_j \to u_{i^*}$). While non-differentiable at exact ties, standard subgradient handling in automatic differentiation frameworks (PyTorch) proved sufficient for stable convergence without explicit smoothing.

\textbf{Gauge-Fixing Dual Labels.} To address the inherent translation invariance of dual variables (where shifting all $u$ and $v$ by a constant yields the same reduced costs), we apply gauge-fixing to the dual labels during training. By centering the optimal row potentials, we ensure a stable, unique learning target for the network.

\section{Runtime Breakdown for Block-Structured Model}
\label{app:runtime_block}

Table~\ref{tab:runtime_breakdown_block} provides the detailed runtime breakdown for the Block-Structured Model. Consistent with the Dense Model results presented in Sec.~\ref{subsec:runtime}, the neural overhead remains negligible for large-scale instances. Specifically, at $N=16{,}384$, the neural components (data transfer, row feature extraction, and RowDualNet inference) account for less than 5\% of the total wall-clock time, verifying that the speedup is driven by the efficient initialization of the solver rather than hardware acceleration.

\begin{table*}[h]
\centering
\caption{Average Runtime breakdown (ms) and percentage of total time for different matrix sizes $N\times N$ in Block-Structured Model matrices.}
\label{tab:runtime_breakdown_block}
\begin{scriptsize}
\setlength{\tabcolsep}{3pt}
\begin{tabular}{lrrrrrrrrrrrr}
\toprule
Component & \multicolumn{2}{c}{$N=512$} & \multicolumn{2}{c}{$N=1024$} & \multicolumn{2}{c}{$N=2048$} & \multicolumn{2}{c}{$N=4096$} & \multicolumn{2}{c}{$N=8192$} & \multicolumn{2}{c}{$N=16384$} \\
\cmidrule(lr){2-3} \cmidrule(lr){4-5} \cmidrule(lr){6-7} \cmidrule(lr){8-9} \cmidrule(lr){10-11} \cmidrule(lr){12-13}
 & Time & \% & Time & \% & Time & \% & Time & \% & Time & \% & Time & \% \\
\midrule
Data Transfer & 0.66 & 2.6 & 5.35 & 7.3 & 10.9 & 4.6 & 71.0 & 5.7 & 237 & 4.0 & 863 & 3.8 \\
Row Features & 3.30 & 13.2 & 2.36 & 3.2 & 3.10 & 1.3 & 10.4 & 0.8 & 54.7 & 0.9 & 210 & 0.9 \\
Tensor Prep. & 0.01 & 0.0 & 0.01 & 0.0 & 0.01 & 0.0 & 0.01 & 0.0 & 0.01 & 0.0 & 0.01 & 0.0 \\
RowDualNet & 1.80 & 7.2 & 1.98 & 2.7 & 2.07 & 0.9 & 4.13 & 0.3 & 9.45 & 0.2 & 26.6 & 0.1 \\
Min-Trick & 0.19 & 0.8 & 0.16 & 0.2 & 0.22 & 0.1 & 0.48 & 0.0 & 1.27 & 0.0 & 4.55 & 0.0 \\
Seeded LAP & 19.1 & 76.2 & 63.6 & 86.6 & 223 & 93.2 & 1159 & 93.1 & 5639 & 94.9 & 21756 & 95.2 \\
\midrule
Total (ms) & 25.1 & 100.0 & 73.5 & 100.0 & 240 & 100.0 & 1245 & 100.0 & 5942 & 100.0 & 22860 & 100.0 \\
\bottomrule
\end{tabular}
\end{scriptsize}
\end{table*}

\section{Work Reduction}
\label{app:work_reduction}

To understand the source of the observed speedups, we inspected the internal state of the LAPJV solver. Specifically, we measured the Greedy Match Rate, i.e. the percentage of rows that can be validly matched immediately using the initial duals, without requiring any augmenting path search.

We analyzed this metric on the structured Block-Structured Model dataset and Dense Model across all problem sizes and compared it to \texttt{SciPy}. Fig.~\ref{fig:solver_work_block} and Fig.~\ref{fig:solver_work_uniform} present the results. 

\textbf{Analysis.} The standard Cold--Start heuristics successfully resolve only $\approx 26\%$ of the assignments in the first phase (for both models), leaving the majority of the problem ($\approx 74\%$) to be solved by the expensive shortest-augmenting-path phase. In contrast, our RowDualNet predictions provide high-quality duals that $\approx 76\%$ of the assignments are resolved instantly in the equality subgraph (for both models). The solver then only needs to run the expensive augmentation logic for the remaining $24\%$ of rows. This $\approx 68\%$ reduction in the search space (Fig.~\ref{fig:solver_work_block}(right) and Fig.~\ref{fig:solver_work_uniform}(right)) is the primary driver of the $2.0{\times}$ -- $2.5{\times}$ speedup observed in Sec.~\ref{subsec:Runtime}.

\begin{figure}[H]
    \centering
    \includegraphics[width=\textwidth]{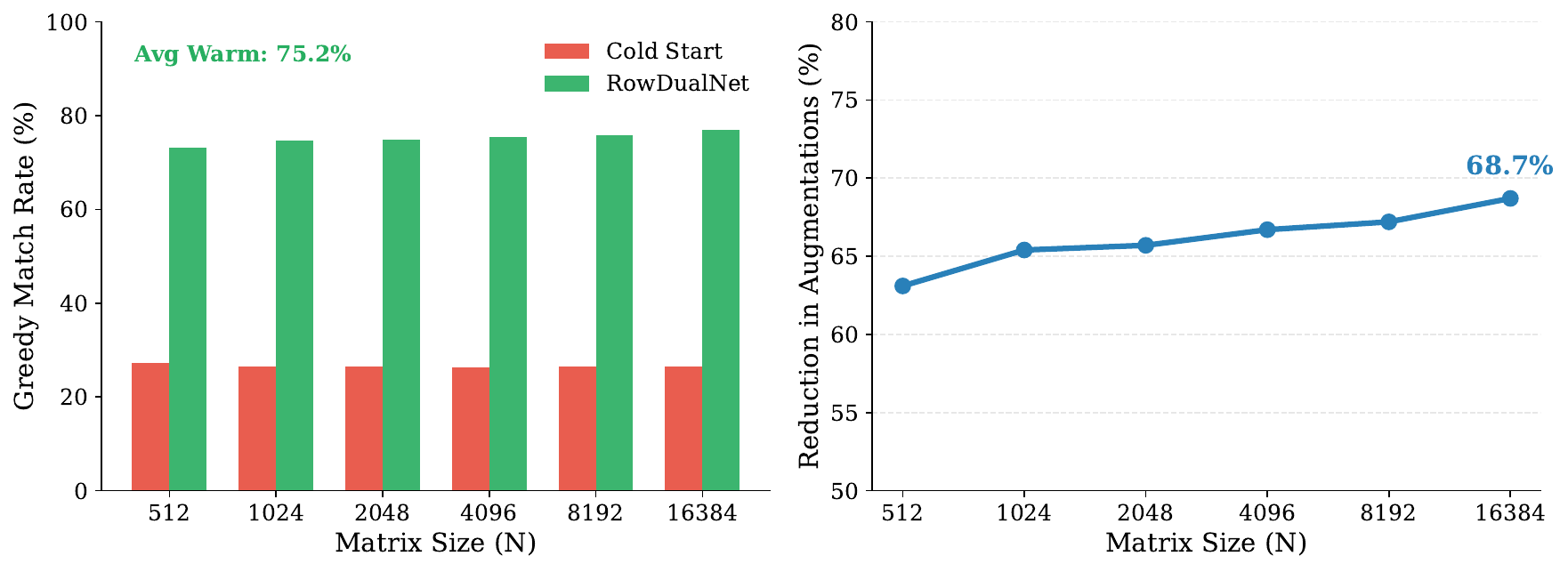}
    \caption{\textbf{Solver Work Analysis for Block-Structured Model.} (Left) The average percentage of rows matched during the initial greedy phase. Standard Cold-Start heuristics (\texttt{SciPy}, Red) consistently resolve only $\approx 26\%$ of assignments. In contrast, our Neural Warm-Start (Green) matches $75.2\%$ of agents immediately. (Right) This superior initialization drastically reduces the number of ``free rows" requiring expensive augmenting path searches, cutting the solver's combinatorial search effort by $68.7\%$.}
    \label{fig:solver_work_block}
\end{figure}    
\begin{figure}[H]
    \centering
    \includegraphics[width=\textwidth]{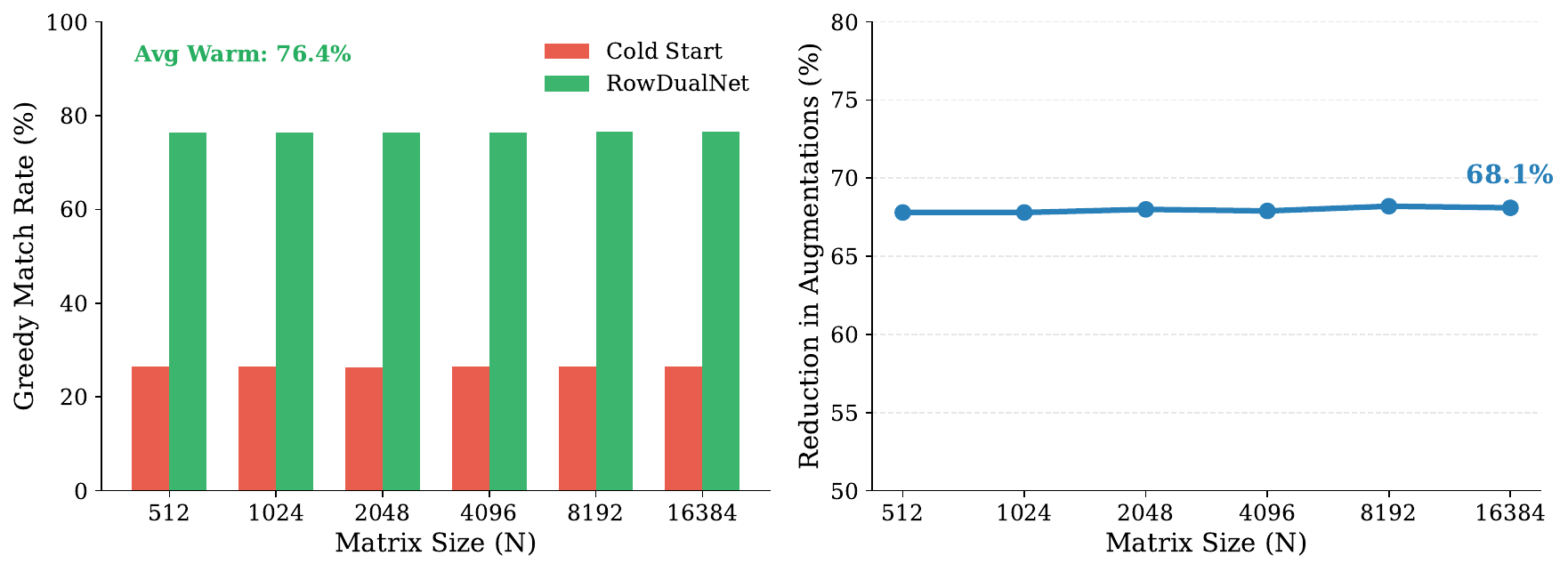}
    \caption{\textbf{Solver Work Analysis for Dense Model.} (Left) The average percentage of rows matched during the initial greedy phase. Standard Cold-Start heuristics (\texttt{SciPy}, Red) consistently resolve only $\approx 27\%$ of assignments. In contrast, our Neural Warm-Start (Green) matches $76.4\%$ of agents immediately. (Right) This superior initialization drastically reduces the number of ``free rows" requiring expensive augmenting path searches, cutting the solver's combinatorial search effort by $68.1\%$.}
    \label{fig:solver_work_uniform}
\end{figure}

\section{Effect of Matrix Sparsity on Acceleration}
\label{app:sparsity}

To further understand the operating regime where our method provides the most benefit, we evaluated how the end-to-end acceleration of Neural Dual Warm-Starts correlates with matrix density. 

\textbf{Experimental Setup.} We generated dense matrices of size $N=4,096$ and $N=8,192$ and progressively increased their sparsity by masking a percentage of the edges (ranging from $10\%$ up to $90\%$ sparsity). We then measured the end-to-end speedup factor of our method relative to both the SciPy and LAP cold-start baselines. 

\textbf{Analysis.} As illustrated in Figure \ref{fig:sparsity_results}, the relative speedup factor exhibits a clear inverse correlation with matrix sparsity. On highly dense matrices ($10\%$ sparsity), our method achieves maximum acceleration ($\approx 2.5\times$). As the sparsity progressively increases toward $90\%$, the speedup margin narrows, gradually converging toward the $1.0\times$ cold-start baseline. 

This behavior is expected: for highly sparse matrices, the total execution time of the classical LAPJV solver is already minimal because the limited number of valid edges naturally restricts the combinatorial search space. Consequently, the margin for relative speedup is inherently limited. Conversely, on dense matrices where the search space is vast and "price wars" are frequent, the neural warm-start yields its maximum benefit.

\begin{figure}[h]
    \centering
    \includegraphics[width=0.6\textwidth]{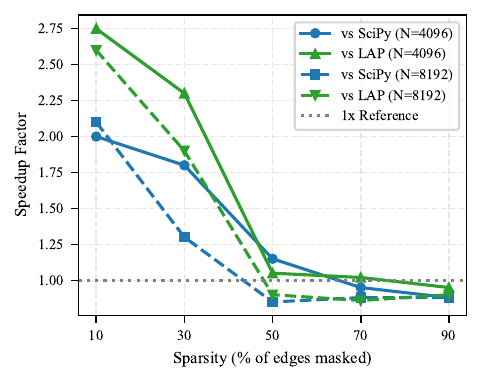}
    \caption{Effect of matrix sparsity on the end-to-end speedup factor for $N=4,096$ and $N=8,192$. As the percentage of masked edges increases, the absolute runtime of the classical solver drops, naturally limiting the margin for relative speedup.}
    \label{fig:sparsity_results}
\end{figure}

\section{Ablation Study: Is Deep Learning Necessary?}
\label{app:ablation}

A critical question is whether a deep neural network is truly required, or if simpler statistical models could achieve similar warm-start benefits. We compared our RowDualNet architecture against three alternative warm-start strategies while keeping the rest of the pipeline (Min-Trick + LAPJV) identical:

\begin{enumerate}[nosep]
    \item \textbf{Row Mean Heuristic:} Initializing $u_i = \text{mean}(C_{i,:})$, thus  testing if simple distributional statistics are sufficient.
    \item \textbf{Random Heuristic:} Initializing $u_i \sim U(0,1)$, thus serving as a sanity check to measure the sensitivity of the solver to noise.
    \item \textbf{Linear Regression:} Replacing the deep RowDualNet encoder with a single linear layer $\hat{u} = Wx + b$, thus acting as a proxy for ``shallow" learning approaches with lack of non-linear expressivity.
\end{enumerate}

\textit{Note: We omit a 'Row Minimum' baseline ($u_i = \min_j C_{ij}$) because this corresponds exactly to the standard initialization phase of the LAPJV algorithm itself. Thus, the 'Row Min' performance is identical to the Cold-Start baseline ($1.0{\times}$).}

\textbf{Results.} As shown in Fig.~\ref{fig:ablation_dense} and Fig.~\ref{fig:ablation_block}, simple heuristics and linear models fail to generalize to large-scale instances. 
First, the {Linear Regression} model (pink) provides marginal speedups at small scales but degrades as $N$ grows, eventually becoming slower than the best cold start run at $N \geq 4{,}096$. This indicates that the mapping from local cost statistics to optimal global duals involves complex non-linear dependencies that a simple linear projection cannot capture.
Second, the {Heuristic (Random)} baseline (yellow) consistently performs worse than the cold start (Speedup ${<} 1.0$). Confirming that the LAPJV solver is sensitive to initialization, a ``bad" seed effectively worsens the search, forcing the solver to perform extra work to correct the duals. 
In contrast, {Neural Dual Warm-Starts} (orange) is the only method that maintains robust positive speedups across all scales, justifying the computational overhead of our deep architecture.

\begin{figure}[H]
    \centering
    \includegraphics[width=\textwidth]{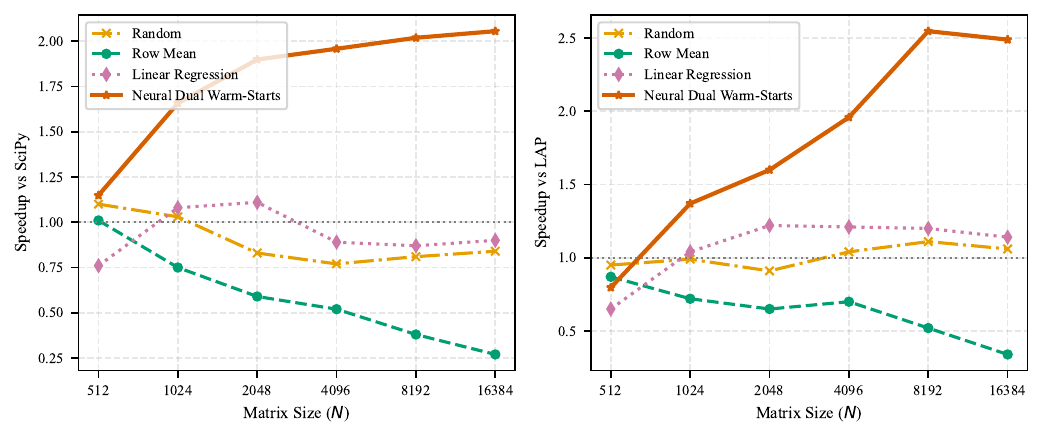} 
    \caption{\textbf{Ablation Study on Dense Model.} Comparison of warm-start strategies (Mean of Ratios) relative to the Cold-Start baseline (1.0${\times}$). While Linear Regression (pink) offers marginal gains for small instances, its performance degrades as $N$ increases, confirming that simple models cannot capture the complexity of large, dense assignment problems. Heuristics like Random and Row Mean (yellow/green) consistently perform worse than the cold start (${<}1.0{\times}$), actively hindering the solver. Only Neural Dual Warm-Starts (orange) maintains robust acceleration, reaching $\approx 2.0{\times}$ speedup at scale.}
    \label{fig:ablation_dense}
\end{figure}

 
\begin{figure}[H]
    \centering
    \includegraphics[width=\textwidth]{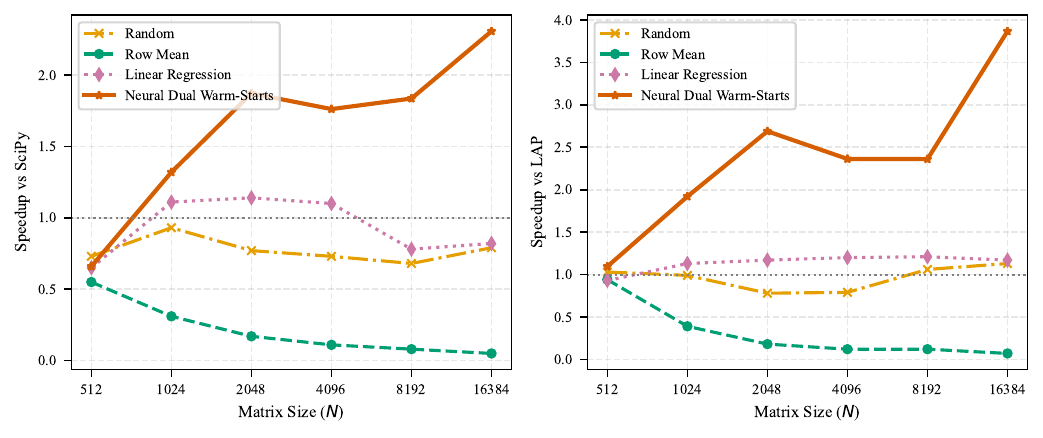} 
    \caption{\textbf{Ablation Study on Block-Structured Model.} Speedup factors on Block-Structured Model instances (Mean of Ratios). The Linear Regression baseline (pink) fails to generalize as problem size grows, eventually falling below the cold-start baseline ($1.0{\times}$) at large $N$. In contrast, Neural Dual Warm-Starts (orange) successfully exploits the latent group structure, achieving superior speedups of up to $\approx 3.5{\times}$ against \texttt{LAP}. This confirms that deep non-linear feature extraction is essential for disentangling complex cost topologies.}
    \label{fig:ablation_block}
\end{figure}

\subsection{Comparison with Classical Subgradient Initialization}

While simple statistical heuristics fail to scale, we tested whether a classic iterative optimization technique, such as subgradient ascent on the dual objective, could provide a comparable warm-start without requiring deep learning. 

\textbf{Experimental Setup.} We implemented a subgradient ascent dual initialization baseline. To ensure a strictly fair comparison, we bounded the execution time of the subgradient method to exactly match the forward-pass inference time of RowDualNet. We evaluated this classical warm-start against our Neural Warm-Start across all four dataset topologies: Dense ($N=8,192$), Block-Structured ($N=8,192$), OpenStreetMap ($N=10,000$), and MOT ($N>6,000$).

\textbf{Analysis.} As shown in Figure \ref{fig:subgradient_ablations}, RowDualNet shows significant performance improvements over the classical subgradient approach across all dataset scenarios. On the dense and structured datasets (Dense and Block-Structured), our neural method achieves an average speedup of $\approx 2.5\times$ over a cold start, whereas the time-bounded subgradient method plateaus at $\approx 1.5\times$ speedup. 

Furthermore, on the relatively sparse Multi-Object Tracking (MOT) dataset, the subgradient method actively degrades performance, yielding inferior results compared to the basic cold-start baseline (speedup $< 1.0\times$). Because sparse matrices execute very rapidly under classical heuristics, any inaccurate dual initialization overhead is heavily penalized. Our deep learning approach, however, successfully navigates this sparsity, reliably maintaining positive acceleration without falling into the subgradient degradation trap.

\begin{figure}[htbp]
    \centering
    \begin{minipage}{0.48\textwidth}
        \centering
        \includegraphics[width=\linewidth]{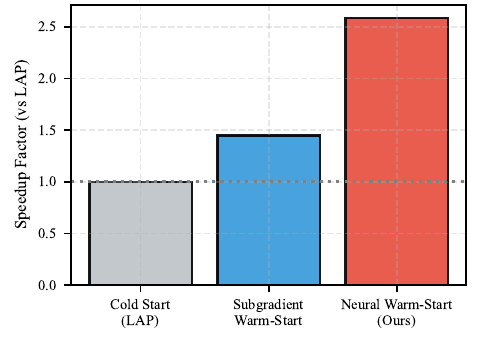}
        \centerline{\textbf{(a)} Dense Model ($N=8192$)}
    \end{minipage}\hfill
    \begin{minipage}{0.48\textwidth}
        \centering
        \includegraphics[width=\linewidth]{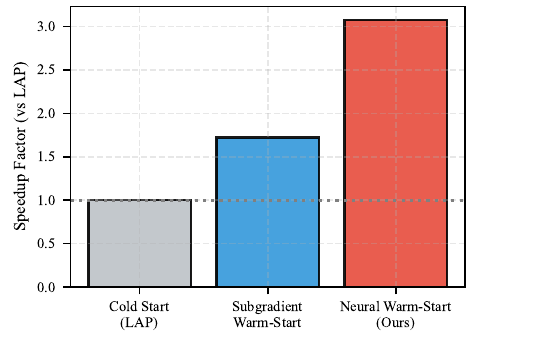}
        \centerline{\textbf{(b)} Block-Structured Model ($N=8192$)}
    \end{minipage}
    
    
    \begin{minipage}{0.48\textwidth}
        \centering
        \includegraphics[width=\linewidth]{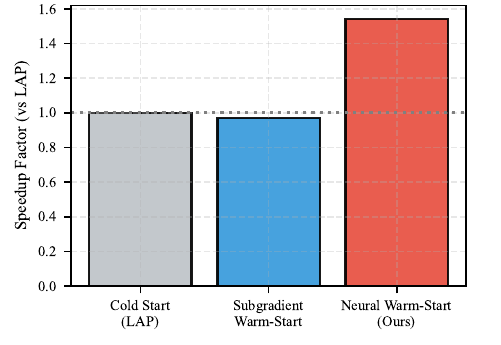}
        \centerline{\textbf{(c)} OpenStreetMap ($N=10000$)}
    \end{minipage}\hfill
    \begin{minipage}{0.48\textwidth}
        \centering
        \includegraphics[width=\linewidth]{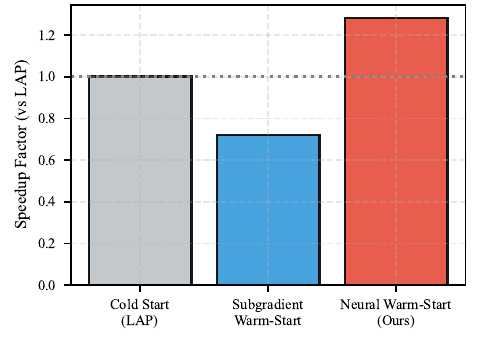}
        \centerline{\textbf{(d)} Multi-Object Tracking ($N>6000$)}
    \end{minipage}
    
    \caption{Ablation comparing Neural Dual Warm-Starts against a time-bounded Subgradient Ascent initialization. \textbf{(a)} and \textbf{(b)} demonstrate that on dense and structured synthetic data, our method yields an average speedup of $\approx 2.5\times$, significantly outperforming the subgradient baseline. \textbf{(c)} shows sustained acceleration on urban topologies. Crucially, \textbf{(d)} illustrates that on highly sparse data (MOT), classical subgradient initialization degrades performance (speedup $< 1.0\times$), whereas RowDualNet maintains positive acceleration.}
    \label{fig:subgradient_ablations}
\end{figure}

\section{Feature Dimension and Selection}
\label{app:Feature_D_S}

A key advantage of our architecture is that it reduces the input dimension from $\mathcal{O}(N^2)$ to $\mathcal{O}(d \cdot N)$, avoiding the quadratic memory bottleneck of graph-based approaches. The exact number of features is not crucial, provided that $d$ is a relatively small constant that does not scale with the problem dimension $N$. 

Naturally, there is a trade-off between the number of features and the coverage of the model. To evaluate this trade-off, we ablated our feature set and trained RowDualNet with varying capacities:
\begin{enumerate}
    \item \textbf{Simple Stats ($d=4$):} Only basic distributional statistics (row min, max, mean, and std).
    \item \textbf{No Positional Encoding ($d=13$):} All formulated features excluding the sinusoidal positional encodings.
    \item \textbf{Full Features ($d=21$):} The complete feature set utilized in our main experiments.
\end{enumerate}

\textbf{Analysis.} We evaluated the end-to-end speedup of these variants on both the Dense and Block-structured datasets at $N=4,096$ (Figure \ref{fig:feature_ablation}). On the Dense model, both the $d=13$ and $d=21$ sets yield similar speedups, consistently outperforming the simple $d=4$ baseline. However, on the more challenging Block-structured matrices, where finer discrimination between similar rows is required, the inclusion of positional encodings ($d=21$) provides significant performance gains by effectively breaking symmetries. Overall, the choice of $d=21$ represents an empirical sweet spot: it balances expressiveness and efficiency, enabling robust performance on difficult structured instances while keeping the feature dimension $d$ strictly small and constant.

\begin{figure}[htbp]
    \centering
    \includegraphics[width=0.6\linewidth]{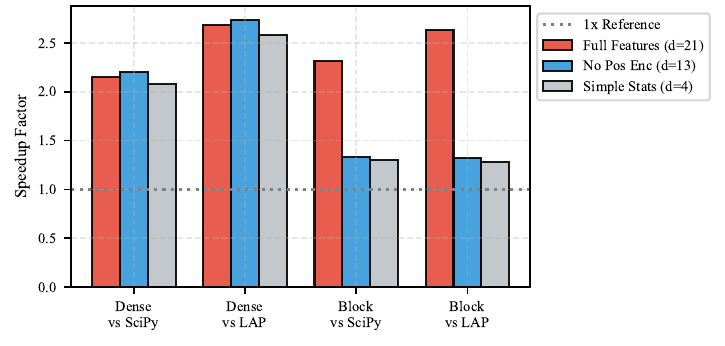}
    \caption{Feature ablation on Dense and Block-structured matrices ($N=4,096$). While simple statistics ($d=4$) provide minimal gains, the full feature set ($d=21$) is required to achieve maximum speedup, particularly on highly structured distributions where positional encodings aid in symmetry breaking.}
    \label{fig:feature_ablation}
\end{figure}

\section{Stability Analysis}
\label{app:variance}
To assess the suitability of assignment algorithms for latency-critical applications (such as real-time tracking), we evaluated the runtime stability of the two state-of-the-art baselines against our proposed method. We benchmarked all solvers on the MOT dataset (which contains complex correlations) with 30 trials per problem size.

\textbf{Baseline Instability.} As shown in Table~\ref{tab:variance_scipy}, \texttt{SciPy} exhibits significant runtime instability, with a Coefficient of Variation (CV) exceeding 40\% for large-scale problems ($N \ge 8{,}000$). The C++ \texttt{LAP} library (Table~\ref{tab:variance_lap}), while faster on average, demonstrates even more extreme volatility. At $N=10{,}000$, the LAPJV runtime fluctuates between $0.89$s and $10.20$s, a factor of over $11{\times}$ (ratio of worst-case to best-case runtime), resulting in a massive CV of $95.5\%$. This extreme unpredictability arises because heuristic initializations are highly sensitive to the specific memory layout and configuration of the input cost matrix.

\textbf{Stability via Learning.} In stark contrast, our Neural Dual Warm-Starting solver (Table~\ref{tab:variance_RowdualNet}) maintains a consistently low CV ($\approx 27-35\%$) across all scales. At $N=16{,}000$, the CV of our approach is $27.8\%$, compared to $41.3\%$ for \texttt{SciPy} and $49.2\%$ for \texttt{LAP}. This empirically proves that the neural network ``stabilizes" the solver. This predictable latency profile makes our method uniquely suitable for real-time guarantees.

\begin{table}[H]
    \centering
    \caption{Runtime Variance of Neural Dual Warm-Starting on MOT Data (30 Trials)}
    \label{tab:variance_RowdualNet}
    \begin{tabular}{rccccc}
    \toprule
    \textbf{N} & \textbf{Mean (s)} & \textbf{CV (\%)} & \textbf{Min (s)} & \textbf{Max (s)} \\
    \midrule
    500 & 0.0033 &  12.12\% & 0.0031 & 0.0047 \\
    1000 & 0.0080 &  8.75\% & 0.0072 & 0.0100 \\
    1500 & 0.0185 &  15.68\% & 0.0151 & 0.0268 \\
    2000 & 0.0365 &  31.23\% & 0.0270 & 0.0731 \\
    4000 & 0.2582 &  35.36\% & 0.1513 & 0.4670 \\
    6000 & 0.5654 &  29.04\% & 0.4560 & 1.8881 \\
    8000 & 0.9863 &  33.64\% & 0.9207 & 3.1372 \\
    10000 & 2.1132 &  32.64\% & 1.6962 & 5.4201 \\
    12000 & 3.8998 &  \textbf{29.05\%} & 2.7937 & 8.5711 \\
    14000 & 5.3839 &  \textbf{31.68\%} & 5.0887 & 13.2564 \\
    16000 & 8.0233 &  \textbf{27.75\%} & 6.7022 & 20.0959 \\
    \bottomrule
    \end{tabular}
\end{table}

\begin{table}[H]
    \centering
    \caption{Runtime Variance of \texttt{SciPy} on MOT Data (30 Trials)}
    \label{tab:variance_scipy}
    \begin{tabular}{cccccc}
    \toprule
    \textbf{N} & \textbf{Mean (s)} & \textbf{CV (\%)} & \textbf{Min (s)} & \textbf{Max (s)} \\
    \midrule
    500  & 0.0014 &  13.84\% & 0.0011 & 0.0021 \\
    1000 & 0.0071 &  19.87\% & 0.0057 & 0.0119 \\
    1500 & 0.0207 &  20.72\% & 0.0161 & 0.0337 \\
    2000 & 0.0394 &  17.35\% & 0.0303 & 0.0612 \\
    4000 & 0.1882 &  21.35\% & 0.1343 & 0.2707 \\
    6000 & 0.6334 &  29.04\% & 0.4264 & 1.0413 \\
    8000 & 2.2909 &  \textbf{45.68\%} & 0.9733 & 4.2924 \\
    10000 & 4.5970 &  \textbf{47.02\%} & 1.9871 & 9.5304 \\
    12000 & 7.5427 &  45.06\% & 3.7714 & 14.9113 \\
    14000 & 12.1742 &  41.66\% & 6.4478 & 22.7414 \\
    16000 & 18.9621 &  41.28\% & \textbf{10.0780} & \textbf{37.6409} \\
    \bottomrule
    \end{tabular}
\end{table}

\begin{table}[H]
    \centering
    \caption{Runtime Variance of \texttt{LAP} on MOT Data (30 Trials)}
    \label{tab:variance_lap}
    \begin{tabular}{cccccc}
    \toprule
    \textbf{N} & \textbf{Mean (s)} & \textbf{CV (\%)} & \textbf{Min (s)} & \textbf{Max (s)} \\
    \midrule
    500  & 0.0004 &  7.38\% & 0.0004 & 0.0005 \\
    1000 & 0.0019 &  5.33\% & 0.0018 & 0.0021 \\
    1500 & 0.0049 &  12.3\% & 0.004 & 0.0063 \\
    2000 & 0.0124 &  50.82\% & 0.0076 & 0.029 \\
    4000 & 0.0998 &  50.68\% & 0.0398 & 0.2182 \\
    6000 & 0.4714 &  43.43\% & 0.1925 & 0.9830 \\
    8000 & 1.1638 &  56.04\% & 0.4957 & 3.1264 \\
    10000 & 2.5853 &  \textbf{95.51\%} & \textbf{0.8934} & \textbf{10.1964} \\
    12000 & 4.1512 &  53.27\% & 1.5650 & 9.8447 \\
    14000 & 6.0758 &  63.76\% & 2.2199 & 13.9578 \\
    16000 & 9.6251 &  49.16\% & 5.5396 & 20.1037 \\
    \bottomrule
    \end{tabular}
\end{table}

\section{Sensitivity of the Density}
\label{app:fallback_test}


 To demonstrate that our chosen average degree metric ($\rho$) is a meaningful indicator of prediction quality rather than an arbitrary conservative threshold, we conducted a sensitivity analysis. We disabled the fallback mechanism and artificially perturbed the optimal dual predictions with increasing levels of Gaussian noise. As shown in Figure \ref{fig:fallback}, there is a direct correlation: as the noise increases, the equality subgraph density ($\rho$) drops, and the solver runtime rises exponentially. This confirms that the density criterion is a highly responsive indicator of prediction quality. The fact that the fallback was rarely triggered during our main benchmarks indicates the consistently high quality of the initial dual predictions, not a lack of sensitivity in the trigger.

 \begin{figure}[htbp]
    \centering
    \includegraphics[width=0.6\linewidth]{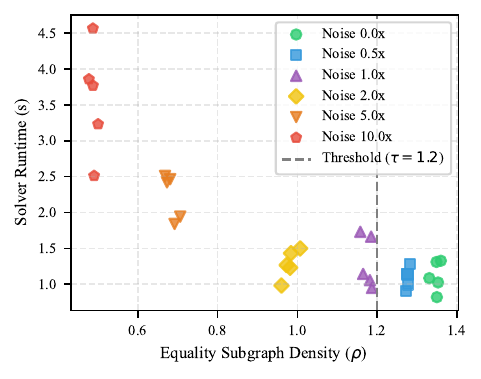}
    \caption{Sensitivity analysis showing that as artificial noise increases, the subgraph density drops and solver runtime rises, validating $\rho$ as a robust quality metric.}
    \label{fig:fallback}
\end{figure}

\section{Sensitivity to the Top-$K$ Hyperparameter}
\label{app:topk_sensitivity}

A core component of the RowDualNet architecture is the Sparse Refine Mechanism, which aggregates competitive information from the top-$K$ columns with the lowest reduced costs. To ensure our framework is robust to the choice of this hyperparameter, we evaluated the end-to-end speedup factor across a range of $K$ values.

\textbf{Analysis.} As illustrated in Figure \ref{fig:topk_sensitivity}, the empirical performance of RowDualNet remains remarkably stable when $K$ is kept as a relatively small constant (e.g., $K \in [8, 32]$). The network successfully extracts sufficient competitive signal from these top few edges to produce high-quality duals. By contrast, scaling $K \rightarrow N$ (i.e., aggregating over the entire row) actively diminishes the total end-to-end speedup. This degradation occurs because calculating the full rank introduces an $\mathcal{O}(N \log N)$ sorting overhead per row, which severely impacts the preprocessing time without providing any significant corresponding gain in dual prediction quality. Consequently, utilizing a small, constant $K$ is optimal for both performance and scalability.

\begin{figure}[htbp]
    \centering
    \includegraphics[width=0.6\linewidth]{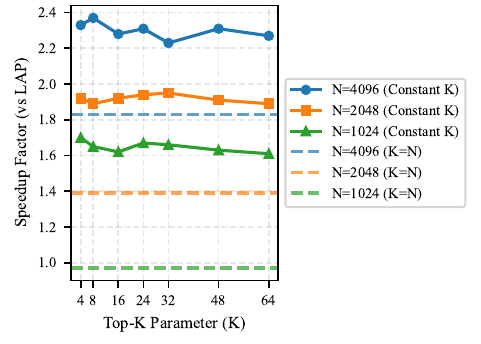}
    \caption{Sensitivity of the end-to-end speedup to the Top-$K$ hyperparameter. Performance remains highly stable for small constant values of $K$. Expanding the search to $K=N$ introduces unnecessary sorting overhead that degrades the overall system speedup.}
    \label{fig:topk_sensitivity}
\end{figure}

\section{Permutation Invariance and Index Bias}
\label{app:permutation}

As detailed in our feature definitions, RowDualNet utilizes sinusoidal positional encodings. The specific role of these features is to break symmetry in "extreme" or degenerate cases where multiple agents might share identical cost statistics, preventing the network from predicting identical potentials for distinct agents. However, it is crucial that these positional features do not introduce harmful spatial or ordering biases.

\textbf{Analysis.} To verify that the model maintains permutation invariance in practice, we conducted a sensitivity test. We selected fixed base cost matrices and evaluated each across 10 random row-wise permutations. As shown in Figure \ref{fig:permutation_invariance}, the resulting solver running times across the permuted instances are highly stable, with a standard deviation of $\sigma \le 0.24$ seconds in all tested cases for both $N=4,096$ and $N=8,192$. This tight variance confirms that the positional features successfully act as a symmetry-breaking mechanism without binding the network's predictions to the arbitrary original indexing of the input matrix.

\begin{figure}[htbp]
    \centering
    \includegraphics[width=0.7\linewidth]{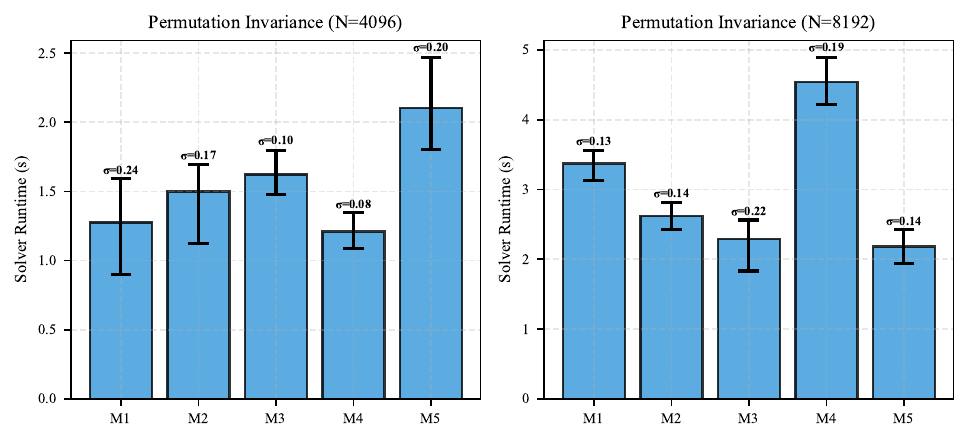}
    \caption{Evaluating permutation invariance. The end-to-end solver runtime remains highly stable ($\sigma \le 0.24$) when the same cost matrix is subjected to random row-wise permutations, confirming that positional encodings do not introduce harmful index bias.}
    \label{fig:permutation_invariance}
\end{figure}

\section{Extended Related Work}
\label{app:extended_related}

This section provides additional context on related approaches that were summarized in Sec.~\ref{sec:related}.

\subsection{Classical and Algorithmic Baselines}
The standard exact solvers for the LAP remain the Hungarian algorithm \cite{kuhn1955hungarian} and the Jonker-Volgenant (LAPJV) algorithm \cite{jonker1987shortest}. While LAPJV is significantly faster in practice than the Hungarian method, both suffer from $\mathcal{O}(N^3)$ worst-case complexity. Purely algorithmic works have sought to speed up assignment without learning. For example, \citet{kriege2019computing} achieves linear-time assignment but only for costs forming a Tree Metric. 

More relevant to dense instances are modern Auction algorithms, specifically, the aggressive $\epsilon$-scaling and cooperative bidding frameworks proposed by \citet{bertsekas2024new}. These methods are designed to mitigate ``price wars'' (oscillations in dual updates) through sophisticated iterative heuristics. However, these techniques remain \textit{reactive} relying on mathematical rules to detect and resolve conflicts during the search trajectory. In contrast, our Neural Dual Warm-Start is \textit{predictive}: it utilizes global data-driven features to anticipate contention and initialize the duals in a conflict-free state, effectively bypassing the price wars entirely. While we benchmark against LAPJV (the standard backend in scientific computing libraries like \texttt{SciPy}), our learned duals are theoretically compatible with any dual-based solver, including Auction algorithms.

\subsection{Learning-Augmented Optimization Frameworks}
Recent advances in learning-augmented optimization have highlighted the potential of integrating predictive models with classical algorithmic frameworks to improve both efficiency and robustness. Foundational work on \emph{algorithms with predictions} established a general theoretical framework for analyzing algorithms that leverage imperfect predictive information, outlining design principles and competitive trade-offs between accuracy and robustness~\cite{mitzenmacher2022algorithms}. Building on this foundation, ~\citet{KhodakBTV22} has investigated how predictive models can be learned jointly with algorithmic decision processes, bridging online learning and prediction-driven algorithm design. Within combinatorial settings, ~\citet{DinitzILMV22} have demonstrated the effectiveness of combining diverse predictive strategies to achieve strong performance guarantees. The incorporation of learned side information has further enhanced classical discrete optimization methods, such as \emph{seeded graph matching} using graph neural networks to improve structured matching performance~\cite{YuX023}. Furthermore,  \citet{ICLR2025_7a8d388b} provides refined bounds integrating learned information into combinatorial optimization. In addition, the work on \emph{binary search with distributional predictions} has studied how distributional side information can guide adaptive search procedures, producing near-optimal trade-offs between robustness and efficiency~\cite{DinitzILMNV24}.

Beyond combinatorial problems, related efforts in continuous optimization have investigated how learning-augmented and data-driven approaches can accelerate convex and linear programming methods. For example, \citet{SakaueO23} proposed learning-based initialization techniques for convex minimization, showing that predictions close to optimal solution sets can substantially reduce optimization time. More recently, theoretical studies have examined the expressive power of graph neural networks for solving general Linear Programming, where \citet{chen2023representing} proved that GNNs can approximate LP solutions, and \citet{qian2024exploring} demonstrated that Message Passing Neural Networks can emulate Interior Point Methods for continuous Linear Programming.

\subsection{Additional Neural Solver Details}
Specific neural architectures like GLAN \cite{liu2024glan} and MAGNET \cite{loveland2025magnet} rely on constructing a graph where edges represent potential assignments. For a dense cost matrix, this graph has $N^2$ edges. Message-passing on this graph incurs a memory cost of $\mathcal{O}(N^2 \cdot F)$, where $F$ is the feature dimension (for $N=16{,}000$, $N^2 = 2.56 \times 10^8$). Even with minimal features, storing the computation graph for backpropagation exceeds the VRAM of modern GPUs. This structural limitation prevents these primal-based methods from scaling to large sizes addressed in our work.

\end{document}